\definecolor{crimson}{rgb}{0.86, 0.08, 0.24}
\definecolor{crimsonglory}{rgb}{0.75, 0.0, 0.2}
\definecolor{denim}{rgb}{0.08, 0.38, 0.74}
\newcommand{\E}{\mathbb{E}}
\newcommand{\I}{\mathbb{I}}
\newcommand{\Cov}{\mathrm{Cov}}
\newcommand{\Var}{\mathrm{Var}}
\newcommand{\trace}{\mathrm{tr}}
\newcommand{\diag}{\mathbf{diag}}
\begin{document}

\title{Capacity-Constrained Continual Learning
}

\author{\name Zheng Wen \email zhengwen@alumni.stanford.edu \\
       \addr Google DeepMind\\
       1600 Amphitheatre Pkwy \\
       Mountain View, CA 94043, USA
       \AND
       \name Doina Precup \email doinap@google.com \\
       \addr Google DeepMind \\
       425, avenue Viger Ouest, bureau/suite 900 \\
       Montréal (Québec) H2Z 1W5
       \AND
       \name Benjamin Van Roy \email benvanroy@google.com \\
       \addr Google DeepMind\\
       1600 Amphitheatre Pkwy \\
       Mountain View, CA 94043, USA
       \AND
       \name Satinder Singh  \email baveja@google.com \\
       \addr Google DeepMind\\
       S2, 8 Handyside Street \\
       King's Cross London, N1C 4DJ
       }

\editor{}

\maketitle

\begin{abstract}%   <- trailing '%' for backward compatibility of .sty file
Any agents we can possibly build are subject to capacity constraints, as memory and compute resources are inherently finite. However, comparatively little attention has been dedicated to understanding how agents with limited capacity should allocate their resources for optimal performance. The goal of this paper is to shed some light on this question by studying a simple yet relevant continual learning problem: the capacity-constrained linear-quadratic-Gaussian (LQG) sequential prediction problem. We derive a solution to this problem under appropriate technical conditions. Moreover, for problems that can be decomposed into a set of sub-problems, we also demonstrate how to optimally allocate capacity across these sub-problems in the steady state. We view the results of this paper as a first step in the systematic theoretical study of learning under capacity constraints.

\end{abstract}

\begin{keywords}
  continual learning, capacity-constrained agent, linear quadratic Gaussian, sequential prediction, agent capacity allocation
\end{keywords}

\section{Introduction}
\label{sec:introduction}

Recent progress in artificial intelligence has seen the rise of increasingly large models which, when trained on sufficient data, lead to the emergence of surprising capabilities. Yet any computational agents that we can possibly build are subject to computational constraints, as memory and compute resources are inherently finite. %Moreover, any agent acting in an interesting environment, which contains for example other agents, is, intuitively, ``small" in comparison. 
However, comparatively little attention has been dedicated in the machine learning and AI literature to understanding how agents with limited capacity should allocate their resources for optimal performance.

The goal of this paper is to shed some light on this question by focusing on agents that repeatedly observe their environment and make sequential predictions, but are limited in the amount of information that they can retain from their interaction. We are interested to understand how agents should decide which information to retain in this context. We focus on information, rather than hard memory constraints, because of the opportunity to formalize this type of constraint into a crisp mathematical problem which can be studied.
%As we will discuss at the end of this section, the constraint on the amount of retained information is a particular version of the capacity constraint, and can be viewed as a relaxed version of the ``hard capacity constraint".

More precisely, we focus on the amount of information that an agent retains at each time $t$ from its observation history, $H_t$, into its internal (agent) state, $S_t$. This amount is measured by the \emph{mutual information} $\I(S_t; H_t)$ between the agent state $S_t$ and the observation history $H_t$. Intuitively, $H_t$ includes all the non-prior information available to the agent about its environment; and the agent state $S_t$ represents the agent's retained information. 
We consider the following capacity constraint at time $t$:
\begin{equation}
\label{eqn:mi-constraint}
\I(S_t; H_t) \leq B,
\end{equation}
where $B$ is an upper bound on the amount of retained information, measured in bits. Note that $\I(S_t; H_t)$ measures the \emph{average} amount of information the agent has retained from $H_t$. Thus, this constraint is a relaxation of the more realistic constraint that ``the agent's memory has $B$ bits and the representation of $S_t$ cannot exceed $B$ at any time". While the latter is a more natural way to phrase the constraint, it is difficult to study and manipulate mathematically. Furthermore, it is well known in information theory \citep{10.5555/1146355}, as discussed in \citet{kumar2023continual}, that at large scale, the hard memory constraint can be well-approximated by  (\ref{eqn:mi-constraint}).

Our goal in this paper is to develop precise intuitions and statements about the implications of constraint (\ref{eqn:mi-constraint}) on learning agents.
In order to takes steps towards this goal, we study a simple yet relevant continual learning problem: the Linear Quadratic Gaussian (LQG) sequential prediction problem with capacity constraint, an example for which analytical solutions can be derived.  LQG sequential prediction  is a classical problem that has been extensively studied in the fields of control theory and statistical learning. It is well known that without capacity constraints, an optimal solution to it can be derived based on Kalman filtering. In this paper, we derive an optimal solution to the LQG prediction problem {\em with capacity constraint} under appropriate technical conditions. We also analyze how the steady-state prediction loss scales with the capacity, as well as other characteristics of the underlying prediction problem, such as the mixing time and signal-to-noise ratio. Moreover, for prediction problems that can be decomposed into a set of sub-problems, we also demonstrate how to optimally allocate capacity across these sub-problems.
We hope that the insights gained from this didactic example can help and guide future agent design in more practical and challenging capacity-constrained continual learning problems.

The remainder of this paper is organized as follows. We first review relevant literature in Section~\ref{sec:literature}. Then, we formulate the capacity-constrained LQG continual learning problem, which we refer to as {\tt $\mathtt{C^3L}$-LQG} for short, in Section~\ref{sec:problem_formulation}. To illustrate how to solve {\tt $\mathtt{C^3L}$-LQG}, in Section~\ref{sec:stepping_stone}, we propose and solve the capacity-constrained LQG prediction problem (referred to as {\tt $\mathtt{C^2P}$-LQG}), which is a relaxation of {\tt $\mathtt{C^3L}$-LQG} without the incremental update constraints, and serves as a ``stepping stone" for the analysis. In Section~\ref{sec:opt-agent}, we derive solutions for  {\tt $\mathtt{C^3L}$-LQG}, under appropriate technical conditions. In Section~\ref{sec:steady-state}, we analyze {\tt $\mathtt{C^3L}$-LQG} in the steady state.
In Section~\ref{sec:optimal_capacity_allocation}, we consider problems that can be decomposed into a set of sub-problems, and study how to optimally allocate capacity across sub-problems; we illustrate this allocation through small computational examples. Finally, we conclude and discuss avenues for future work in Section~\ref{sec:conclusion}.

\section{Literature review}
\label{sec:literature}

Continual learning has been the focus of much recent work, even having a new conference dedicated to it. Much of this work is focused on non-stationary environments. But, continual learning is useful even in stationary settings. For example,  \citet{sutton2007role} demonstrates the importance of tracking the best solution in stationary environments, but where the agent is faced with partial observability. 

Our work is closely related to one paper in particular,  \citet{kumar2023continual}, which formalizes continual learning and casts it as reinforcement learning (RL) \citep{sutton2018reinforcement} with limited compute resources. Computational resources can be defined in various ways, but we focus on limitations of the expected capacity of a learner to retain information.

Bounded rationality has been also studied in relation to both natural
\citep{Simon1990, griffiths2020understanding}
and artificial agents \citep{russell1994provably}. In particular, \citet{russell1994provably} formulate the bounded agent design problem as a constrained optimization problem, which is close in spirit to what we will present.

One way to represent boundedness is through bottlenecks on the agent's capacity to retain information from its past. Learning is then carried out as a process that tries to retain past information that is relevant to predicting the future.
\citet{bialek2001predictability} studied the connections between \emph{predictive information} and the complexity of dynamics underlying a time series.
In follow-up work, \citet{chigirev2003optimal}
introduced an information-theoretic method for non-parametric, nonlinear dimensionality reduction, based on the infinite cluster limit of rate distortion theory. 

This paper focuses on the Linear Quadratic Gaussian sequential prediction problem with capacity constraint, which is closely related to the classic Kalman filtering problem \citep{kalman1960new, chui2017kalman}. Specifically, Kalman filtering can be viewed as a special case of our problem formulation, where the capacity is  infinite. Indeed, the results that we will present specialize to this case.

 The problem we consider can also be viewed as a special case of developing  capacity-constrained agents for HMMs and POMDPs. Prior research has studied capacity constraints for these types of problems, albeit of a different nature from the one we study here. For example, \citet{pajarinen2011periodic} and \citet{cubuktepe2021robust}  developed finite-state controllers for POMDPs, which can be viewed as a special case of capacity-constrained policy representations. Predictive state representations \citep{littman2001predictive, singh2012predictive} and Observable Operator Models \citep{jaeger2005efficient} have been proposed as a means to 
 % represent and
 learn a representation of the past interaction history with an HMM or POMDP which is finite. They use a specific finite-rank assumption on the dynamics model, which is a different notion of limited capacity.
 With this context, we will now turn to specifying the details of our problem formulation.
 
\section{Problem Formulation}
\label{sec:problem_formulation}

\subsection{Linear Gaussian sequential prediction}
\label{sec:sequential_prediction}

 Let $\left(\theta_t \right)_{t=0}^{\infty}$ denote a sequence of vectors in $\Re^d$. Assume that $\theta_0$ is drawn from $N(0, \Sigma_0)$, and that $\theta_t$ evolves according to the following auto-regressive model:
\begin{equation}
    \label{eq:theta_dynamics}
    \theta_{t+1} = A \theta_t + \omega_{t}, \quad \forall t=0,1,\ldots
\end{equation}
where $\omega_{t} \in \Re^d$ is drawn iid from a multivariate Gaussian distribution $N(0, \Sigma_\omega)$. Here, $\theta_t$ is the latent state of the system.

An agent aims to learn how to make accurate predictions of $\theta$ under this auto-regressive model, but it only receives noisy observations $Y_{t} \in \Re^m$, given by:
\begin{equation}
    \label{eq:outcome}
    Y_{t} = C \theta_t + \nu_{t},
\end{equation}
where $C \in \Re^{m \times d}$ and $\nu_{t}$ is drawn iid from the multivariate Gaussian observation noise distribution $N(0, \Sigma_\nu)$.

At each time $t=1, 2, \ldots$, the agent needs to make a prediction $\hat{\theta}_t$ of $\theta_t$, based on the sequence of observations $(Y_{t}), t=1, 2, \dots$'s, without ever observing the $\theta_t$'s.
As is traditional in control theory, we will assume that the agent knows the true model of the system, consisting of the dynamics matrix $A$, the observation matrix $C$, and the noise covariances $\Sigma_0$, $\Sigma_\omega$, and $\Sigma_\nu$

Let $H_t = \left(Y_1, Y_2, \ldots, Y_t \right)$ denote the ``history" of observations received by the agent by the time it needs to make the prediction $\hat{\theta}_{t} \in \Re^d$ of the latent vector $\theta_{t}$. We consider a setting where the agent chooses $\hat{\theta}_{t}$ based on $H_t$ and the known parameters $A$, $C$, $\Sigma_0$, $\Sigma_\omega$, and $\Sigma_\nu$. The agent's goal is to minimize the expected cumulative quadratic prediction error in the first $T$ time steps:
\begin{equation}
    \label{eq:objective}
    \min \sum_{t=1}^{T} \E \big [ \big \| \theta_t - \hat{\theta}_t \big \|_2^2 \big],
\end{equation}
which is also referred to as the total loss over the first $T$ steps.

\begin{remark}
A more general objective is to minimize a weighted quadratic loss,
\begin{equation}
    \label{eq:more-general-objective}
    \min \sum_{t=1}^{T} \E \big[ (\theta_t - \hat{\theta}_t)^\top P (\theta_t - \hat{\theta}_t) \big],
\end{equation}
where $P$ is a positive definite matrix. However, the objective in equation~(\ref{eq:more-general-objective}) can be reformulated as the objective in equation~(\ref{eq:objective}). To see this, note that if we define $\theta'_t = P^{1/2} \theta_t$ and $\hat{\theta}'_t = P^{1/2} \hat{\theta}_t$, then we have
\[
\big \| \theta'_t - \hat{\theta}'_t \big \|_2^2 = \big \| P^{1/2} \left( \theta_t - \hat{\theta}_t \right ) \big \|_2^2 = (\theta_t - \hat{\theta}_t)^\top P (\theta_t - \hat{\theta}_t).
\]
Therefore, all results obtained by studying equation~(\ref{eq:objective}) can be readily extended to the problem described by equation~(\ref{eq:more-general-objective})
\end{remark}

\subsection{Capacity-constrained continual learning}
\label{sec:capacity_constrained_agent}

The sequential prediction problem described in Section~\ref{sec:sequential_prediction} is a classical problem for which an analytical solution can be derived based on Kalman filtering (see Appendix~\ref{app:without_capacity_constraint}). In this paper, we study this problem but in the case of an  agent which performs   updates satisfying the following requirements:
\begin{enumerate}
    \item \textbf{Agent state:} the agent maintains and updates an \emph{agent state} $S_t$ at each time $t$. Moreover, the agent's prediction $\hat{\theta}_{t}$ depends on the history $H_t$ only through the agent state $S_t$, that is $\hat{\theta}_{t} \perp H_t \, | \, S_t$. Mathematically, this is equivalent to saying $\hat{\theta}_{t} \sim q_t(\cdot | S_t)$ for some conditional probability measure $q_t$. Without loss of generality, we assume that $S_0 = 0$.
    \item \textbf{Incremental updates:} the agent updates its agent state $S_{t+1}$ incrementally based on its previous agent state $S_t$, and new observation $Y_{t+1}$. That is, $ S_{t+1} \perp H_{t+1} \, | \, S_t, Y_{t+1} $.  Mathematically, this is equivalent to $S_{t+1} \sim p_t (\cdot | S_t, Y_{t+1})$ for some conditional probability measure $p_t$.
    \item \textbf{Capacity constraint:} the agent is subject to the capacity constraint described in the introduction:
    \[
    \I(S_t; H_t) \leq B, \forall t=0, 1, \dots T-1,
    \]
 where $\I(\cdot, \cdot)$ denotes the mutual information and $B \geq 0$ is the capacity limit.
\end{enumerate}

In summary, the capacity-constrained LQG continual learning problem, which is referred to as {\tt $\mathtt{C^3L}$-LQG}, is formulated as follows:
% problem is formulated as follows:
% In summary, the considered problem is formulated as follows:
\begin{align}
    \label{opt:agent_design}
    \textstyle \min_{p_t, q_t} \quad & \, \textstyle \sum_{t=1}^T \E \big [ \big  \| \theta_t - \hat{\theta}_t \big \|_2^2 \big ] \\
    \text{s.t.} \quad &   \, S_{t+1} \sim p_t (\cdot | S_t, Y_{t+1}),  \; \, \forall t=0, \ldots, T-1 \nonumber \\
    & \, \hat{\theta}_{t} \sim q_t(\cdot | S_t),  \; \, \forall t=0, \ldots, T-1 \nonumber \\
    & \,\I(S_t; H_t) \leq B, \; \, \forall t=1, \ldots, T \nonumber \\
    & \, S_0 = 0 \nonumber
\end{align}

\section{Capacity-constrained prediction}
\label{sec:stepping_stone}

Before building agents for the {\tt $\mathtt{C^3L}$-LQG} problem formulated in Section~\ref{sec:capacity_constrained_agent}, we first consider an easier problem that serves as a ``stepping stone" for that problem. In particular, we consider a capacity-constrained prediction problem, referred to as {\tt $\mathtt{C^2P}$-LQG} and defined as
\begin{align}
    \label{opt:stepping_stone}
    \textstyle \min_{q_t} \quad & \, \textstyle \sum_{t=1}^T \E \big [ \big  \| \theta_t - \hat{\theta}_t \big \|_2^2 \big ] \\
    \text{s.t.} \quad &   \, \hat{\theta}_{t} \sim q_t(\cdot | H_t),   \; 
      \I(\hat{\theta}_t; H_t) \leq B, \; \, \forall t=1, \ldots, T
    \nonumber 
\end{align}
Note that {\tt $\mathtt{C^2P}$-LQG} has relaxed the incremental update constraint in {\tt $\mathtt{C^3L}$-LQG}.
Due to this relaxation, this problem decomposes over time $t$.
We can solve this problem analytically and the solution is
\[
\hat{\theta}_t = F_t \bar{\theta}_t + \epsilon_t,
\]
where $\bar{\theta}_t = \E \left[ \theta_t \, \middle | \, H_t\right] \in \Re^d$, $F_t \in \Re^{d \times d}$ is a positive-definite matrix depending on $\Cov[\bar{\theta}_t ]$ and capacity $B$, and $\epsilon_t \sim N(0, \Psi_t)$, where $\Psi_t$ is a covariance matrix also depending on $\Cov[\bar{\theta}_t ]$ and capacity $B$. Note that we need $\epsilon_t$ to be independent of noises in the environment; however, $\epsilon_t$'s can depend on each other across time.

Specifically, the solution to {\tt $\mathtt{C^2P}$-LQG} is as follows: let $\lambda_{t1}, \ldots, \lambda_{td}$ denote the $d$ eigenvalues of $\Cov[\bar{\theta}_t]$, and let $U_t \in \Re^{d \times d}$ denote the orthogonal matrix encoding the eigenvectors of $\Cov[\bar{\theta}_t]$. Define $\eta_t$ as the unique solution of equation
\begin{equation}
\sum_{i=1}^d \left[\log (2 \lambda_{ti} / \eta) \right]^+ = 2B,
\end{equation}
and define
\[
B_{ti} = \frac{1}{2} \left[ \log (2 \lambda_{ti}/\eta_t) \right]^+ \quad  \forall i=1,2,\ldots, d.
\]
To simplify the exposition, we also define two diagonal matrices $D^{\mathrm{F}}_t, D^{\mathrm{\Psi}}_t \in \Re^{d \times d}$ as
\begin{align}
D^{\mathrm{F}}_t = & \, \mathrm{diag} \left( 1 - \exp(-2B_{t1}),  \ldots, 1 - \exp(-2B_{td}) \right) \nonumber \\
D^{\mathrm{\Psi}}_t = & \, \mathrm{diag} \left(
\left[1 - \exp(-2B_{t1}) \right] \exp(-2B_{t1}) \lambda_{t1}, 
\cdots,
\left[1 - \exp(-2B_{td}) \right] \exp(-2B_{td}) \lambda_{td}
\right).
\label{eqn:D_F_D_Psi}
\end{align}
Then the solution to {\tt $\mathtt{C^2P}$-LQG}
is given in the following theorem:
\begin{theorem}[Optimal capacity-constrained prediction]
\label{thm:optimal_cc_prediction}
One optimal solution to the capacity-constrained prediction problem ({\tt $\mathtt{C^2P}$-LQG}) is
\begin{equation}
\label{eqn:optimal-prediction}
\hat{\theta}_t = F_t \bar{\theta}_t + \epsilon_t,
\end{equation}
where $\bar{\theta}_t = \E \left[ \theta_t \, \middle | \, H_t\right] \in \Re^d$, $F_t = U_t D_t^{\mathrm{F}} U_t^\top$, and $\epsilon_t \sim N(0, \Psi_t)$ with
$\Psi_t = U_t D^{\mathrm{\Psi}}_t U_t^\top$. Note that $U_t$ is the orthogonal matrix encoding the eigenvectors of $\Cov[\bar{\theta}_t ]$, and $D^{\mathrm{F}}_t$ and $D^{\mathrm{\Psi}}_t$ are defined in equation~\ref{eqn:D_F_D_Psi}.
\end{theorem}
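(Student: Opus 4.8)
The plan is to exploit the fact that $\mathtt{C^2P}$-LQG decomposes into $T$ independent single-step problems, since both the objective and the per-time constraint $\I(\hat\theta_t;H_t)\le B$ separate across $t$. So I would fix $t$ and solve $\min_{q_t}\E[\|\theta_t-\hat\theta_t\|_2^2]$ subject to $\I(\hat\theta_t;H_t)\le B$. The first move is a bias--variance (orthogonality) decomposition: because $\hat\theta_t\sim q_t(\cdot|H_t)$ gives $\hat\theta_t\perp\theta_t\mid H_t$ and $\bar\theta_t=\E[\theta_t\mid H_t]$, the cross term vanishes and
\[
\E[\|\theta_t-\hat\theta_t\|_2^2]=\E[\|\theta_t-\bar\theta_t\|_2^2]+\E[\|\bar\theta_t-\hat\theta_t\|_2^2].
\]
The first term is a constant independent of $q_t$, so minimizing the objective is equivalent to minimizing $\E[\|\bar\theta_t-\hat\theta_t\|_2^2]$.

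Next I would reduce the information constraint to $\bar\theta_t$. Since $\bar\theta_t$ is a deterministic function of $H_t$, the data-processing inequality gives $\I(\hat\theta_t;H_t)\ge\I(\hat\theta_t;\bar\theta_t)$, so any feasible scheme satisfies $\I(\hat\theta_t;\bar\theta_t)\le B$; and since both the reduced objective and $\I(\hat\theta_t;\bar\theta_t)$ depend only on the joint law of $(\bar\theta_t,\hat\theta_t)$, it is without loss to let $\hat\theta_t$ depend on $H_t$ only through $\bar\theta_t$, for which $\I(\hat\theta_t;H_t)=\I(\hat\theta_t;\bar\theta_t)$ via the Markov chain $\hat\theta_t-\bar\theta_t-H_t$. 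In the linear--Gaussian model $\bar\theta_t$ is a linear image of the jointly Gaussian $H_t$, hence $\bar\theta_t\sim N(0,\Cov[\bar\theta_t])$, and the problem becomes the Gaussian ``distortion--rate'' problem $\min\E[\|\bar\theta_t-\hat\theta_t\|_2^2]$ s.t. $\I(\hat\theta_t;\bar\theta_t)\le B$.

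Then comes the converse lower bound. Rotate into the eigenbasis $U_t$ of $\Cov[\bar\theta_t]$, so that $z=U_t^\top\bar\theta_t$ has independent coordinates $z_i\sim N(0,\lambda_{ti})$ and $\E[\|\bar\theta_t-\hat\theta_t\|_2^2]=\sum_i D_i$ with $D_i=\E[(z_i-\hat z_i)^2]$. Using independence of the $z_i$ together with subadditivity of conditional entropy and the data-processing inequality, bound $\I(\hat\theta_t;\bar\theta_t)=\I(z;\hat\theta_t)\ge\sum_i\I(z_i;\hat z_i)$; and for each scalar, translation invariance of differential entropy, ``conditioning reduces entropy,'' and the Gaussian maximum-entropy property give $\I(z_i;\hat z_i)\ge\frac12[\log(\lambda_{ti}/D_i)]^+$. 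Hence feasibility forces $\sum_i\frac12[\log(\lambda_{ti}/D_i)]^+\le B$, and minimizing $\sum_iD_i$ subject to this rate budget is a reverse water-filling problem whose KKT conditions yield $D_i=\min(\lambda_{ti},\eta_t/2)$, i.e.\ the per-coordinate rate $B_{ti}=\frac12[\log(2\lambda_{ti}/\eta_t)]^+$ with $\eta_t$ the stated threshold. I expect this converse---justifying that a Gaussian test channel is optimal among all, possibly non-Gaussian and nonlinear, reconstructions---to be the main obstacle.

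Finally, achievability: exhibit the stated $\hat\theta_t=F_t\bar\theta_t+\epsilon_t$ with $F_t=U_tD_t^{\mathrm{F}}U_t^\top$ and $\epsilon_t\sim N(0,U_tD_t^{\mathrm{\Psi}}U_t^\top)$ drawn independently of the environment noises, and verify by a direct coordinatewise computation that it is the jointly Gaussian channel realizing the water-filling allocation. In each eigencoordinate, writing $a_i=1-\exp(-2B_{ti})$, one checks $\Var(\hat z_i)=a_i\lambda_{ti}$, which yields $D_i=\lambda_{ti}\exp(-2B_{ti})$ and $\I(z_i;\hat z_i)=B_{ti}$; summing gives $\I(\hat\theta_t;\bar\theta_t)=\sum_iB_{ti}=B$, so the constraint is met and the distortion equals the converse lower bound, establishing optimality. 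Confirming $F_t\succ0$ and that $\epsilon_t$ may be taken independent of the environment noises completes the argument.
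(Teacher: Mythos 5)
Your proposal is correct and follows essentially the same route as the paper: time-wise decomposition, the orthogonality reduction to $\E[\|\bar{\theta}_t-\hat{\theta}_t\|_2^2]$, the data-processing relaxation to the Gaussian distortion--rate problem, and tightness of the relaxation via the Markov chain $\hat{\theta}_t - \bar{\theta}_t - H_t$. The only difference is that you inline the proof of the distortion--rate function (eigenbasis rotation, per-coordinate Shannon lower bound, reverse water-filling, Gaussian test channel), which the paper delegates to its Appendix~\ref{app:gaussian-dr} with an identical argument.
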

\begin{proof}
Notice that the capacity-constrained prediction problem decomposes over time $t$. Thus, we only need to consider the one-step optimization problem
\begin{align}
    \textstyle \min_{q_t} \quad & \,  \E \big [ \big  \| \theta_t - \hat{\theta}_t \big \|_2^2 \big ] \\
    \text{s.t.} \quad &   \, \hat{\theta}_{t} \sim q_t(\cdot | H_t),   \; 
      \I(\hat{\theta}_t; H_t) \leq B, 
    \nonumber 
\end{align}
for all $t=1,2, \ldots, T$. Furthermore, note that with $\bar{\theta}_t = \E \left[ \theta_t | H_t \right ]$, we have
\[
\E \big [ \big  \| \theta_t - \hat{\theta}_t \big \|_2^2 \big ] =
\E \big [ \big  \| \theta_t - \bar{\theta}_t \big \|_2^2 \big ] +
\E \big [ \big  \| \bar{\theta}_t - \hat{\theta}_t \big \|_2^2 \big ].
\]
Since $\E \big [ \big  \| \theta_t - \bar{\theta}_t \big \|_2^2 \big ]$ does not depend on $\hat{\theta}_t$, it is equivalent to solve the following optimization problem:
\begin{align}
    \label{opt:stepping_stone_2}
    \textstyle \min_{q_t} \quad & \,  \E \big [ \big  \| \bar{\theta}_t - \hat{\theta}_t \big \|_2^2 \big ] \\
    \text{s.t.} \quad &   \, \hat{\theta}_{t} \sim q_t(\cdot | H_t),   \; 
      \I(\hat{\theta}_t; H_t) \leq B .
    \nonumber 
\end{align}
Since $\bar{\theta}_t = \E [\theta_t | H_t]$ is a deterministic function of $H_t$, thus, from the data processing inequality, we have
\[
\I (\hat{\theta}_t ; \bar{\theta}_t) \leq \I(\hat{\theta}_t ; H_t).
\]
Hence, a relaxation of the optimization problem (\ref{opt:stepping_stone_2}) is
\begin{align}
    \label{opt:stepping_stone_3}
    \textstyle \min_{q_t} \quad & \,  \E \big [ \big  \| \bar{\theta}_t - \hat{\theta}_t \big \|_2^2 \big ] \\
    \text{s.t.} \quad &   \, \hat{\theta}_{t} \sim q_t(\cdot | H_t),   \; 
      \I(\hat{\theta}_t; \bar{\theta}_t) \leq B .
    \nonumber 
\end{align}
Since both the objective and the constraint $\I(\hat{\theta}_t; \bar{\theta}_t) \leq B$ only depend on the joint distribution of $\big( \hat{\theta}_t, \bar{\theta}_t \big)$, we can also replace the constraint $ \hat{\theta}_{t} \sim q_t(\cdot | H_t)$ with  $ \hat{\theta}_{t} \sim q_t(\cdot | \bar{\theta}_t)$. Consequently, the optimization problem (\ref{opt:stepping_stone_3}) is exactly the optimization problem defining the Gaussian distortion-rate function, and the optimal objective value is $D \left(B, \Cov[\bar{\theta}_t] \right)$. Please refer to Appendix~\ref{app:gaussian-dr} for details.

Finally, we prove that one optimal solution for optimization problem (\ref{opt:stepping_stone_3}) is also feasible for optimization problem (\ref{opt:stepping_stone_2}), consequently, it is also  a solution for optimization problem (\ref{opt:stepping_stone_2}). To see it, note that from Theorem~\ref{thm:gaussian-dr} in Appendix~\ref{app:gaussian-dr}, one optimal solution for optimization problem (\ref{opt:stepping_stone_3}) is $\hat{\theta}_t = F_t \bar{\theta}_t + \epsilon_t$, as is defined in equation~\ref{eqn:optimal-prediction}. Note that with this solution, we have $\hat{\theta}_t \perp H_t | \bar{\theta}_t$, thus $\I (\hat{\theta}_t ; \bar{\theta}_t) = \I(\hat{\theta}_t ; H_t)$ and this solution is also feasible for optimization problem (\ref{opt:stepping_stone_2}).
\end{proof}

As Theorem~\ref{thm:optimal_cc_prediction} illustrates, an optimal solution to {\tt $\mathtt{C^2P}$-LQG} (\ref{opt:stepping_stone}) is a \emph{linear Gaussian} agent, in the sense that $\hat{\theta}_t$ is a linear function of $\bar{\theta}_t$, with additive multivariate Gaussian noise.

\section{Optimal linear Gaussian agent}
\label{sec:opt-agent}

we now revisit the capacity-constrained continual learning problem ({\tt $\mathtt{C^3L}$-LQG}) described in Section~\ref{sec:capacity_constrained_agent}, based on the 
results and insights developed in Section~\ref{sec:stepping_stone}. We are particularly interested when an optimal solution to the capacity-constrained continual learning problem (\ref{opt:agent_design}) is a \emph{linear Gaussian} agent. In particular, we say an continual learning agent is a linear Gaussian agent if (1) $S_{t+1}$ is a linear function of $(S_t, Y_{t+1})$, with additive multivariate Gaussian noise; and (2) $\hat{\theta}_t$ is also a linear function of $S_t$, with additive multivariate Gaussian noise.

The following theorem provides a sufficient condition under which an optimal solution to the {\tt $\mathtt{C^3L}$-LQG} problem (\ref{opt:agent_design}) is a linear Gaussian agent.

\begin{theorem}[Optimal linear Gaussian agent]
\label{thm:optimal-linear-agent}
For all $t=1,2, \ldots, T$, define
\[
F_t = U_t D^{\mathrm{F}}_t U_t^\top, \quad \Psi_t = U_t D^{\mathrm{\Psi}}_t U_t^\top, \quad \textrm{and} \quad 
P_t = A \Cov[\theta_t | H_t] A^\top + \Sigma_\omega,
\]
where $U_t$ is the orthogonal matrix encoding the eigenvectors of $\Cov[\bar{\theta}_t ]$, and $D^{\mathrm{F}}_t$ and $D^{\mathrm{\Psi}}_t$ are defined in equation~\ref{eqn:D_F_D_Psi}.
If (1) $F_t$ is invertible for all $t$, and (2) for all $t=1,2, \ldots, T-1$,
\begin{multline}
\label{eq:incremental-update-cond}
     \left(I - P_t C^\top (C P_t C^\top + \Sigma_\nu)^{-1} C  \right) A F_t^{-1} \Psi_t F_t^{-1} A^\top 
    \left(I -  C^\top (C P_t C^\top + \Sigma_\nu)^{-1} C P_t  \right)  \\
    \leq F_{t+1}^{-1} \Psi_{t+1} F_{t+1}^{-1},
\end{multline}
then an optimal solution to the {\tt $\mathtt{C^3L}$-LQG} problem (\ref{opt:agent_design}) is a linear Gaussian agent, and the optimal total cost is the same as the relaxed capacity-constrained prediction problem (\ref{opt:stepping_stone}).
\end{theorem}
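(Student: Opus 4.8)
The plan is to exploit the fact that {\tt $\mathtt{C^2P}$-LQG} is a relaxation of {\tt $\mathtt{C^3L}$-LQG} and then exhibit a single linear Gaussian agent that is feasible for {\tt $\mathtt{C^3L}$-LQG} yet attains the {\tt $\mathtt{C^2P}$-LQG} optimum. First I would establish the lower bound: in any feasible {\tt $\mathtt{C^3L}$-LQG} solution the prediction satisfies $\hat{\theta}_t \perp H_t \mid S_t$, so by the data processing inequality $\I(\hat{\theta}_t; H_t) \le \I(S_t; H_t) \le B$; hence the induced conditional law $q_t(\cdot \mid H_t)$ is feasible for {\tt $\mathtt{C^2P}$-LQG} with the same per-step loss. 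Consequently the optimal value of {\tt $\mathtt{C^2P}$-LQG} lower-bounds that of {\tt $\mathtt{C^3L}$-LQG}, and it suffices to construct a {\tt $\mathtt{C^3L}$-LQG}-feasible agent achieving the {\tt $\mathtt{C^2P}$-LQG} value given by Theorem~\ref{thm:optimal_cc_prediction}.

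The construction takes the agent state to be the {\tt $\mathtt{C^2P}$-LQG} optimal prediction itself, $S_t = \hat{\theta}_t = F_t \bar{\theta}_t + \epsilon_t$ (with $S_0 = 0$), and the read-out $q_t$ to be the identity map $\hat{\theta}_t = S_t$, so the agent-state condition $\hat{\theta}_t \perp H_t \mid S_t$ holds trivially and the capacity constraint $\I(S_t; H_t) = \I(\hat{\theta}_t; H_t) \le B$ is inherited from the {\tt $\mathtt{C^2P}$-LQG} solution. The substance is realizing this $S_t$ through incremental, linear-Gaussian updates. Writing the Kalman recursion $\bar{\theta}_{t+1} = (I - G_t C) A \bar{\theta}_t + G_t Y_{t+1}$ with gain $G_t = P_t C^\top (C P_t C^\top + \Sigma_\nu)^{-1}$, and using invertibility of $F_t$ (Condition~1) to substitute $\bar{\theta}_t = F_t^{-1}(S_t - \epsilon_t)$, I obtain $S_{t+1} = M_t S_t + F_{t+1} G_t Y_{t+1} + \xi_{t+1}$, where $M_t = F_{t+1}(I - G_t C) A F_t^{-1}$ and $\xi_{t+1} = \epsilon_{t+1} - M_t \epsilon_t$. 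This update is genuinely incremental and linear in $(S_t, Y_{t+1})$ provided $\xi_{t+1}$ can be taken as fresh Gaussian noise drawn independently of $(S_t, Y_{t+1}, H_{t+1})$.

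This is where I would use the freedom, noted after Theorem~\ref{thm:optimal_cc_prediction}, to correlate the $\epsilon_t$ across time: define them by the recursion $\epsilon_{t+1} = M_t \epsilon_t + \xi_{t+1}$ with $\epsilon_1 \sim N(0, \Psi_1)$ and each $\xi_{t+1}$ an independent zero-mean Gaussian drawn independently of the environment noises. The marginal covariances then satisfy $\Cov[\epsilon_{t+1}] = M_t \Psi_t M_t^\top + \Cov[\xi_{t+1}]$, so to hit the target $\Cov[\epsilon_{t+1}] = \Psi_{t+1}$ I must set $\Cov[\xi_{t+1}] = \Psi_{t+1} - M_t \Psi_t M_t^\top$. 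The main obstacle, and the reason Condition~(2) is imposed, is that this is a valid noise covariance only if it is positive semidefinite. Conjugating the matrix inequality \eqref{eq:incremental-update-cond} by the symmetric matrix $F_{t+1}$ on both sides shows it is exactly equivalent to $M_t \Psi_t M_t^\top \preceq \Psi_{t+1}$, i.e. $\Cov[\xi_{t+1}] \succeq 0$; thus Condition~(2) is precisely what makes the fresh noise realizable, and Condition~(1) is what lets $M_t$ and the substitution be well-defined.

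Finally I would collect the pieces. By induction the construction gives $\Cov[\epsilon_t] = \Psi_t$ with $\epsilon_t$ independent of the environment (hence of $\theta_t$ and $\bar{\theta}_t$), so $S_t = \hat{\theta}_t$ has the same joint law with $\theta_t$ as the {\tt $\mathtt{C^2P}$-LQG} optimum and incurs the same per-step loss; the agent is linear Gaussian, the incremental-update and agent-state conditions hold because $\xi_{t+1}$ is fresh noise independent of $H_{t+1}$ given $(S_t, Y_{t+1})$, and the capacity constraint is inherited. Therefore this feasible {\tt $\mathtt{C^3L}$-LQG} agent attains the {\tt $\mathtt{C^2P}$-LQG} value, which by the first step lower-bounds the {\tt $\mathtt{C^3L}$-LQG} value; the two problems share the same optimal total cost and the constructed linear Gaussian agent is optimal. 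I expect the positive-semidefinite verification of $\Cov[\xi_{t+1}]$ to be the crux, with the Kalman substitution and the bookkeeping of independence being the routine parts.
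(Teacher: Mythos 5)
Your proposal is correct and takes essentially the same approach as the paper's proof: both realize the Theorem~\ref{thm:optimal_cc_prediction} solution as an incrementally updated agent state, use invertibility of $F_t$ to substitute $\bar{\theta}_t = F_t^{-1}(\hat{\theta}_t - \epsilon_t)$ into the Kalman recursion, and use condition (2) (equivalently, after conjugating by the symmetric matrix $F_{t+1}$, the requirement that $\Psi_{t+1} - M_t \Psi_t M_t^\top$ be positive semidefinite) to realize the fresh noise $\xi_{t+1}$. Your write-up is somewhat more explicit than the paper's on two points it leaves implicit: the data-processing-inequality argument that the {\tt $\mathtt{C^2P}$-LQG} value lower-bounds the {\tt $\mathtt{C^3L}$-LQG} value, and the recursive cross-time correlation structure of the $\epsilon_t$'s.
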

\begin{proof}
We prove this theorem by showing that the solution to the capacity-constrained prediction problem (\ref{opt:stepping_stone}) derived in Theorem~\ref{thm:optimal_cc_prediction} admits an incremental update under the conditions of this theorem.
Notice that from Kalman filtering (see Appendix~\ref{app:without_capacity_constraint}), we have
\[
\bar{\theta}_{t+1} = \left(I - P_t C^\top (C P_t C^\top + \Sigma_\nu)^{-1} C  \right) A \bar{\theta}_t +  P_t C^\top (C P_t C^\top + \Sigma_\nu)^{-1} Y_{t+1}.
\]
From Theorem~\ref{thm:optimal_cc_prediction}, an optimal solution to problem (\ref{opt:stepping_stone}) is $\hat{\theta}_t = F_t \bar{\theta}_t + \epsilon_t$. Since we assume that $F_t$ is invertible, we have $\bar{\theta}_t = F_t^{-1} \big ( \hat{\theta}_t - \epsilon_t \big)$. Consequently, we have
\begin{multline}
F_{t+1} \bar{\theta}_{t+1} + F_{t+1} \left(I - P_t C^\top (C P_t C^\top + \Sigma_\nu)^{-1} C  \right) A F_t^{-1}  \epsilon_t 
= \\
F_{t+1} \left(I - P_t C^\top (C P_t C^\top + \Sigma_\nu)^{-1} C  \right) A F_t^{-1} \hat{\theta}_t  + F_{t+1} P_t C^\top (C P_t C^\top + \Sigma_\nu)^{-1} Y_{t+1}.
\end{multline}
Since $\hat{\theta}_{t+1} = F_{t+1} \bar{\theta}_{t+1} + \epsilon_{t+1}$, thus, we can compute $\hat{\theta}_{t+1}$ based on $\hat{\theta}_t$ and $Y_{t+1}$ as long as 
\begin{equation}
\label{eq:incremental-update-cond-1}
\Cov \left[  F_{t+1} \left(I - P_t C^\top (C P_t C^\top + \Sigma_\nu)^{-1} C  \right) A F_t^{-1}  \epsilon_t \right] \leq \Cov \left[\epsilon_{t+1} \right] = \Psi_{t+1}.
\end{equation}
To see it, notice that if the above condition holds, we can independently sample 
\[
\epsilon_{t+1}' \sim N \left( 0,  \Psi_{t+1} - \Cov \left[  F_{t+1} \left(I - P_t C^\top (C P_t C^\top + \Sigma_\nu)^{-1} C  \right) A F_t^{-1}  \epsilon_t \right] \right),
\]
and compute
\begin{multline}
\hat{\theta}_{t+1} = F_{t+1} \bar{\theta}_{t+1} + F_{t+1} \left(I - P_t C^\top (C P_t C^\top + \Sigma_\nu)^{-1} C  \right) A F_t^{-1}  \epsilon_t + \epsilon_{t+1}'
= \\
F_{t+1} \left(I - P_t C^\top (C P_t C^\top + \Sigma_\nu)^{-1} C  \right) A F_t^{-1} \hat{\theta}_t  + F_{t+1} P_t C^\top (C P_t C^\top + \Sigma_\nu)^{-1} Y_{t+1} + \epsilon_{t+1}'. \nonumber
\end{multline}
Notice that condition (\ref{eq:incremental-update-cond-1}) is exactly the same as the condition (\ref{eq:incremental-update-cond}). This concludes the proof.
\end{proof}

It is worth mentioning that the sufficient conditions in Theorem~\ref{thm:optimal-linear-agent} are easy to verify. It is easy to compute $F_t$, $\Psi_t$, and $P_t$ by definition, and then numerically verify the sufficient conditions. In Section~\ref{sec:scalar_case}, we will prove that the sufficient conditions always hold in the \emph{scalar case} with $d=m=1$. On the other hand, whether or not the conditions in Theorem~\ref{thm:optimal-linear-agent} are also necessary is still an open problem, and we leave it to future work.

\subsection{Scalar case}
\label{sec:scalar_case}

We now provide an example in which the sufficient conditions in Theorem~\ref{thm:optimal-linear-agent} always hold, and consequently an optimal capacity-constrained learning agent is a linear Gaussian agent. Specifically, we show that for the scalar case with $d=m=1$, as long as $B>0$, the sufficient conditions in  Theorem~\ref{thm:optimal-linear-agent} always hold.

To see it, note that in the scalar case, all the matrices have reduced to scalars. Also, the covariance matrices have reduced to the variances. Note that from Section~\ref{sec:stepping_stone}, for the scalar case we have $U_t=1$ and
\begin{equation}
\label{eq:F_Psi_scalar}
F_t = 1 - \exp(-2B) \quad \text{and} \quad \Psi_t = \left[ 1 - \exp(-2B)\right] \exp(-2B) \Var \left( \bar{\theta}_t \right),
\end{equation}
for all $t$. Consequently, as long as $B>0$, $F_t \neq 0$ (i.e. $F_t$ is invertible). We now prove that condition (\ref{eq:incremental-update-cond}) holds in the scalar case.
Notice that in the scalar case, this condition reduces to
\[
\left [ 
F_{t+1} \left ( \frac{\Sigma_\nu}{P_t C^2 + \Sigma_\nu} \right) A F_t^{-1}
\right ]^2 \Psi_t \leq \Psi_{t+1}.
\]
From equation~\ref{eq:F_Psi_scalar}, the above inequality reduces to
\begin{equation}
\left [ 
\left ( \frac{\Sigma_\nu}{P_t C^2 + \Sigma_\nu} \right) A 
\right ]^2 \Var \left( \bar{\theta}_t \right) \leq \Var \left( \bar{\theta}_{t+1} \right).
\label{eq:scalar_cond}
\end{equation}
Moreover, from the law of the total variance, we have
\[
\Var \left[\theta_t \right] = \Var \left[\bar{\theta}_t \right] + \E [M_t] = \Var \left[\bar{\theta}_t \right] + M_t ,
\]
since $M_t = \Cov[\theta_t | H_t]$ is deterministic based on Kalman filtering. Moreover,
we have
\[
\Var \left[ \theta_t \right ] = A^{2t} \Sigma_0 +  \frac{1 - A^{2t}}{1-A^2} \Sigma_\omega.
\]
Also, from Kalman filtering, we have
\[
M_{t+1} = \frac{\Sigma_\nu}{P_t C^2 + \Sigma_\nu} \left( A^2 M_t + \Sigma_\omega \right).
\]
We now prove inequality~(\ref{eq:scalar_cond}), notice that
\begin{align}
   \left[ \left ( \frac{\Sigma_\nu}{P_t C^2 + \Sigma_\nu} \right) A 
\right ]^2 \Var \left( \bar{\theta}_t \right)  \stackrel{(a)}{\leq} & \, 
\left ( \frac{\Sigma_\nu}{P_t C^2 + \Sigma_\nu} \right) A^2 \Var \left( \bar{\theta}_t  \right)  \nonumber \\
\stackrel{(b)}{=}& \, \left ( \frac{\Sigma_\nu}{P_t C^2 + \Sigma_\nu} \right) A^2 \left[  A^{2t} \Sigma_0 +  \frac{1 - A^{2t}}{1-A^2} \Sigma_\omega  - M_t \right] \nonumber \\
=& \, \left ( \frac{\Sigma_\nu}{P_t C^2 + \Sigma_\nu} \right) 
 \left[  A^{2(t+1)} \Sigma_0 +  \frac{A^2 - A^{2(t+1)}}{1-A^2} \Sigma_\omega  - A^2M_t \right] \nonumber \\
 =& \, \left ( \frac{\Sigma_\nu}{P_t C^2 + \Sigma_\nu} \right) 
 \left[  A^{2(t+1)} \Sigma_0 +  \frac{1 - A^{2(t+1)}}{1-A^2} \Sigma_\omega  - (A^2M_t + \Sigma_\omega) \right] \nonumber \\
 \stackrel{(c)}{=}& \, \left ( \frac{\Sigma_\nu}{P_t C^2 + \Sigma_\nu} \right) 
 \left[ \Var \left[\theta_{t+1} \right]  - (A^2M_t + \Sigma_\omega) \right] \nonumber \\
 \stackrel{(d)}{=}& \, \left ( \frac{\Sigma_\nu}{P_t C^2 + \Sigma_\nu} \right)   \Var \left[\theta_{t+1} \right] - M_{t+1}  \nonumber \\
 \stackrel{(e)}{\leq}& \, \Var \left[\theta_{t+1} \right] - M_{t+1} = \Var \left[\bar{\theta}_{t+1} \right],
\end{align}
where (a) and (e) follow from $\frac{\Sigma_\nu}{P_t C^2 + \Sigma_\nu^2} \leq 1$, (b) and (c) follow from the definition of $\Var[\theta_t]$ and $\Var[\bar{\theta}_t]$, and (d) follows from Kalman filtering. Thus, the sufficient conditions in Theorem~\ref{thm:optimal-linear-agent} always hold in the scalar case.

\section{Capacity-constrained continual learning in steady state}
\label{sec:steady-state}

It is well known that if $A$ is asymptotically stable, i.e. all the eigenvalues of $A$ have a magnitude less than $1$, then as time $t \rightarrow \infty$, the joint distribution of $(\theta_t, Y_t)$ will converge to a steady-state distribution. Specifically, the steady-state distribution of $\theta_{\infty}$ is $N(0, \Sigma)$, where $\Sigma$ is the unique solution to the discrete-time Lyapunov equation:
\begin{equation}
    \label{eqn:lyapunov}
    \Sigma = A \Sigma A^\top + \Sigma_{\omega}.
\end{equation}
Similarly, the posterior covariance $M_t = \Cov[\theta_t | H_t]$ also converges to a steady-state value $M$. It can be computed as follows: first, we compute the steady-state forward covariance matrix $P = \lim_{t \rightarrow \infty} \Cov[\theta_{t+1} | H_t]$, 
which is a solution to the discrete-time algebraic Riccati equation:
\begin{equation}
    \label{eqn:riccati}
    A P A^\top - P + \Sigma_{\omega} -
    A P C^\top (C P C^\top + \Sigma_\nu)^{-1} C P A^\top = 0.
\end{equation}
Then, $M$ can be computed as
\begin{equation}
    M = P - PC^\top (CPC^\top + \Sigma_\nu)^{-1} CP.
\end{equation}
Consequently, from the law of total covariance, the steady-state value of $\Cov[\bar{\theta}_t]$ is $\Sigma-M$.

\subsection{Steady-state capacity-constrained prediction}
\label{sec:ss-ccp}

We now revisit the capacity-constrained prediction problem ({\tt $\mathtt{C^2P}$-LQG}) studied in Section~\ref{sec:stepping_stone} in steady state. Similarly, let $\lambda_{1}, \ldots, \lambda_{d}$ denote the $d$ eigenvalues of $\lim_{t \rightarrow \infty} \Cov[\bar{\theta}_t] = \Sigma - M$, and let $U \in \Re^{d \times d}$ denote the orthogonal matrix encoding the eigenvectors of $\Sigma-M$. Define $\eta$ as the unique solution of equation
$
\sum_{i=1}^d \left[\log (2 \lambda_{i} / \eta) \right]^+ = 2B
$,
and define
$
B_{i} = \frac{1}{2} \left[ \log (2 \lambda_{i}/\eta) \right]^+$, for all $i=1,2,\ldots, d$.
We define two matrices, $F$ and $\Psi$, as
\begin{align}
\label{eqn:F_Psi_ss}
F = & \, U \mathrm{diag} \left( 1 - \exp(-2B_{1}),  \ldots, 1 - \exp(-2B_{d}) \right) U^\top  \\
\Psi = & \, U \mathrm{diag} \left(
\left[1 - \exp(-2B_{1}) \right] \exp(-2B_{1}) \lambda_{1}, 
\cdots,
\left[1 - \exp(-2B_{d}) \right] \exp(-2B_{d}) \lambda_{d}
\right) U^\top. \nonumber
\end{align}

Then, from Theorem~\ref{thm:optimal_cc_prediction}, we have the following result:
\begin{theorem}[Steady-state capacity-constrained prediction]
\label{thm:ss-ccp}
One optimal solution to the steady-state capacity-constrained prediction problem ({\tt $\mathtt{C^2P}$-LQG}) is
\[
\hat{\theta}_t = F \bar{\theta}_t + \epsilon_t,
\]
where $\bar{\theta}_t = \E [\theta_t | H_t]$ and $\epsilon_t \sim N(0, \Psi)$, and $F$ and $\Psi$ are defined in equation~\ref{eqn:F_Psi_ss}. Moreover, the optimal asymptotic cost is
\begin{equation}
\label{eqn:optimal_asym_cost}
\trace(M) + \sum_{i=1}^d \exp(-2B_i) \lambda_i.
\end{equation}
\end{theorem}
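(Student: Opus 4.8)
The plan is to obtain both claims as essentially immediate consequences of Theorem~\ref{thm:optimal_cc_prediction} combined with the steady-state convergence of the relevant covariances. For the first claim, I would observe that the steady-state problem is just {\tt $\mathtt{C^2P}$-LQG} in the regime where $\Cov[\bar{\theta}_t]$ has converged to its limit $\Sigma - M$. Since Theorem~\ref{thm:optimal_cc_prediction} gives the optimal predictor as $\hat{\theta}_t = F_t \bar{\theta}_t + \epsilon_t$, with $F_t$ and $\Psi_t$ built from the eigendecomposition of $\Cov[\bar{\theta}_t]$, replacing $\Cov[\bar{\theta}_t]$ by $\Sigma - M$ and the time-indexed eigenvalues and eigenvectors $(\lambda_{ti}, U_t)$ by their limits $(\lambda_i, U)$ immediately yields the stated $F$, $\Psi$, and hence the claimed optimal solution.

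For the asymptotic cost, I would invoke the orthogonal error decomposition already established in the proof of Theorem~\ref{thm:optimal_cc_prediction}:
\[
\E\big[\|\theta_t - \hat{\theta}_t\|_2^2\big] = \E\big[\|\theta_t - \bar{\theta}_t\|_2^2\big] + \E\big[\|\bar{\theta}_t - \hat{\theta}_t\|_2^2\big].
\]
The first term is the Kalman filtering error, which in steady state equals $\trace(\Cov[\theta_t \mid H_t]) = \trace(M)$. The second term is the distortion incurred by the rate-constrained reconstruction of $\bar{\theta}_t$, which equals the Gaussian distortion-rate value $D(B, \Sigma - M)$ from the appendix.

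The only genuine computation is to show that this distortion-rate value equals $\sum_{i=1}^d \exp(-2B_i)\lambda_i$. I would verify this directly in the eigenbasis $U$, where $\bar{\theta}_t$ has independent coordinates with variances $\lambda_i$, by evaluating the per-coordinate distortion of the optimal coder $\hat{\theta}_{ti} = (1 - \exp(-2B_i))\bar{\theta}_{ti} + \epsilon_{ti}$ with $\epsilon_{ti} \sim N(0, [1 - \exp(-2B_i)]\exp(-2B_i)\lambda_i)$ independent of $\bar{\theta}_{ti}$. Writing $x_i = \exp(-2B_i)$, a short calculation gives $\E[(\bar{\theta}_{ti} - \hat{\theta}_{ti})^2] = x_i^2 \lambda_i + (1 - x_i)x_i\lambda_i = x_i\lambda_i$, i.e.\ exactly $\exp(-2B_i)\lambda_i$; equivalently this is the reverse water-filling distortion $\min(\eta/2, \lambda_i)$. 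Summing over $i$ and adding $\trace(M)$ yields the claimed cost~(\ref{eqn:optimal_asym_cost}).

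The main obstacle is bookkeeping rather than conceptual: I must ensure the steady-state limit is taken correctly, i.e.\ that the eigenvalues and eigenvectors of $\Cov[\bar{\theta}_t]$ genuinely converge to those of $\Sigma - M$ (which follows from $M_t \to M$ together with $\Sigma$ being the stationary covariance), and that the water level $\eta_t$ and bit allocations $B_{ti}$ converge to their steady-state counterparts $\eta$ and $B_i$. Granting this convergence, both the optimal predictor form and the per-coordinate distortion identity transfer directly, so no new optimization argument is needed beyond what Theorem~\ref{thm:optimal_cc_prediction} and the Gaussian distortion-rate result already provide.
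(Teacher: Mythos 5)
Your proposal is correct and follows essentially the same route as the paper: the predictor form is read off from Theorem~\ref{thm:optimal_cc_prediction} with $\Cov[\bar{\theta}_t]$ replaced by its steady-state limit $\Sigma - M$, the cost is split via the same orthogonal decomposition into $\trace(M)$ plus the reconstruction distortion, and your per-coordinate calculation $x_i^2\lambda_i + (1-x_i)x_i\lambda_i = x_i\lambda_i$ in the eigenbasis is exactly the arithmetic the paper performs via the trace identity $\trace\left([I-F]\Cov(\bar{\theta}_t)[I-F] + \Psi\right) = \sum_{i=1}^d \exp(-2B_i)\lambda_i$.
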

\begin{proof}
Note that the first result directly follows from Theorem~\ref{thm:optimal_cc_prediction}. We now prove the second result. Note that
\[
\E \left[  \| \theta_t - \hat{\theta}_t  \|_2^2 \right] =
\E \left[  \| \theta_t - \bar{\theta}_t  \|_2^2 \right]
+ 
\E \left[  \| \bar{\theta}_t - \hat{\theta}_t  \|_2^2 \right].
\]
Moreover, note that in the steady state
\begin{align}
    \E \left[  \| \theta_t - \bar{\theta}_t  \|_2^2 \right] =& \, \E \left[ \E \left[  \| \theta_t - \bar{\theta}_t  \|_2^2 \, \middle | \, H_t \right] \right] \nonumber \\
    =& \,  \trace \left( \E \left[ \E \left[ (\theta_t - \bar{\theta}_t) (\theta_t - \bar{\theta}_t)^\top  \, \middle | \, H_t \right] \right] \right ) = \, \trace \left( M \right ), \nonumber
\end{align}
where the last equality follows from $ \E \left[ (\theta_t - \bar{\theta}_t) (\theta_t - \bar{\theta}_t)^\top  \, \middle | \, H_t \right] = M$ in the steady state.

 On the other hand, since in the steady state $\hat{\theta}_t = F \bar{\theta}_t + \epsilon_t$, we have
 $\bar{\theta}_t-\hat{\theta}_t = (I-F) \bar{\theta}_t - \epsilon_t$. Thus,
 \begin{align}
     \E \left[  \| \bar{\theta}_t - \hat{\theta}_t  \|_2^2 \right] =& \,  \E \left[  \| (I-F) \bar{\theta}_t - \epsilon_t  \|_2^2 \right] \nonumber \\
     =& \, \trace \left ( 
     \E \left[  [ (I-F) \bar{\theta}_t - \epsilon_t ]
     [ (I-F) \bar{\theta}_t - \epsilon_t ]^\top \right]
     \right ) \nonumber \\
     =& \, \trace \left ( 
     [I-F] \Cov(\bar{\theta}_t) [I-F] + \Psi
     \right ). \nonumber
 \end{align}
Note that $F$ is symmetric and hence $F^\top=F$. For simplicity of exposition, define
\begin{align}
    \Lambda = & \, \mathrm{diag} \left(\lambda_1, \ldots, \lambda_d \right )
    \nonumber \\
    D_F = & \, \mathrm{diag} \left( 1 - \exp(-2B_{1}),  \ldots, 1 - \exp(-2B_{d}) \right) \nonumber \\
    D_\Psi = & \, \mathrm{diag} \left(
\left[1 - \exp(-2B_{1}) \right] \exp(-2B_{1}) \lambda_{1}, 
\cdots,
\left[1 - \exp(-2B_{d}) \right] \exp(-2B_{d}) \lambda_{d}
\right). \nonumber
\end{align}
Then we have
\begin{align}
[I-F] \Cov(\bar{\theta}_t) [I-F] + \Psi =& \, [U U^\top- U D_F U^\top] U \Lambda U^\top [U U^\top-U D_F U^\top] + U D_\Psi U^\top \nonumber \\
=& U \left[ (I-D_F) \Lambda (I-D_F) + D_\Psi \right] U^\top.
\nonumber
\end{align}
From the cyclic property of the matrix trace, we have
\begin{align}
\trace \left([I-F] \Cov(\bar{\theta}_t) [I-F] + \Psi \right) =& \, \trace \left( U \left[ (I-D_F) \Lambda (I-D_F) + D_\Psi \right] U^\top \right)  \nonumber \\
=& \, \trace \left(  \left[ (I-D_F) \Lambda (I-D_F) + D_\Psi \right] U^\top U \right)  \nonumber \\
=& \, \trace \left(  \left[ (I-D_F) \Lambda (I-D_F) + D_\Psi \right] \right)  \nonumber \\
=& \, \sum_{i=1}^d \exp(-2B_i) \lambda_i.
\end{align}
This concludes the proof.
\end{proof}

\begin{remark}
Consider two special cases: $B=\infty$ and $B=0$. We now show that the optimal asymptotic cost derived in Theorem~\ref{thm:ss-ccp} recovers the optimal asymptotic cost in these two special cases. To see it, note that when $B=\infty$, $\hat{\theta}_t = \bar{\theta}_t$ and hence the optimal asymptotic cost is $\trace(M)$. On the other hand, when $B=\infty$, $B_i=\infty$ for all $i$. Thus, 
equation~\ref{eqn:optimal_asym_cost} in Theorem~\ref{thm:ss-ccp} also reduces to $\trace(M)$.

On the other hand, when $B=0$, $\hat{\theta}_t$ is independent of $H_t$ and the optimal asymptotic cost is $\trace(\Sigma)$. On the other hand, when $B=0$, $B_i=0$ for all $i$, and equation~\ref{eqn:optimal_asym_cost} reduces to
$\trace(M) + \sum_{i=1}^d \lambda_i = \trace(M) + \trace(\Sigma-M) = \trace(\Sigma)$. 
\end{remark}

\subsection{Steady-state capacity-constrained continual learning}

We now consider the capacity-constrained continual learning in the steady state. In particular, we identify a sufficient condition under which the optimal solution to the steady-state capacity-constrained prediction problem (Theorem~\ref{thm:ss-ccp}) admits a linear incremental update. Consequently, an an optimal solution to the {\tt $\mathtt{C^3L}$-LQG} problem (\ref{opt:agent_design}) in the steady state is a linear Gaussian agent, and the asymptotic optimal total cost is the same as the steady-state capacity-constrained prediction problem considered in Section~\ref{sec:ss-ccp}.

Recall that for the steady-state capacity-constrained prediction problem, 
$\hat{\theta}_t = F \bar{\theta}_t + \epsilon_t$, where $\epsilon_t \sim N(0, \Psi)$ and $F$ and $\Psi$ are defined in equation~\ref{eqn:F_Psi_ss}. Without loss of generality, we assume that for some $m=0,1,\ldots, d$, $B_1, B_2, \ldots, B_m >0$, but $B_{m+1}, B_{m+2}, \ldots, B_d = 0$. We also define $U_1 \in \Re^{d \times m}$ as the first $m$ columns of $U$, and $U_2 \in \Re^{d \times (m-d)}$ as the last $m-d$ columns of $U$. In other words, $U_1$ encodes the eigenvectors of $\lim_{t \rightarrow \infty} \Cov [\bar{\theta}_t] = \Sigma-M$ associated with eigenvalues $\lambda_1, \ldots, \lambda_m$, and $U_2$ encodes the eigenvectors of $\Sigma-M$ associated with eigenvalues $\lambda_{m+1}, \ldots, \lambda_d$. To simplify the exposition, we also define a diagonal matrix $D \in \Re^{m \times m}$ as
\begin{equation}
    \label{eqn:d-matrix}
    D = \, \mathrm{diag} \left( \frac{\exp(-2B_1) \lambda_1}{1-\exp(-2B_1)}, \ldots, \frac{\exp(-2B_m) \lambda_1}{1-\exp(-2B_m)} \right) \in \Re^{m \times m}
\end{equation}
As a shorthand notation, we also define
\[
K=\left(I - P C^\top (C P C^\top + \Sigma_\nu)^{-1} C  \right) A \quad \text{and} \quad L= P C^\top (C P C^\top + \Sigma_\nu)^{-1}.
\]
Consequently, in the steady state, the Kalman filter reduces to
\begin{equation}
\bar{\theta}_{t+1} = K \bar{\theta}_t + L Y_{t+1}. \label{eqn:kalman_update_short}
\end{equation}
We then have the following theorem:
\begin{theorem}
\label{thm:ss-ccc}
With $U_1$, $D$ and $K$ defined above,
if 
\begin{enumerate}[label=(\Alph*)]
\item $\mathrm{range}(K^\top U_1) \subseteq \mathrm{range}(U_1)$ and 
\item $U_1^\top K U_1 D U_1^\top K^\top U_1 \leq D$
\end{enumerate}
then an optimal solution to the {\tt $\mathtt{C^3L}$-LQG} problem (\ref{opt:agent_design}) in the steady state is a linear Gaussian agent, and the asymptotic optimal total cost is the same as the steady-state capacity-constrained prediction problem.
\end{theorem}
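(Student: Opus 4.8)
The plan is to mirror the argument of Theorem~\ref{thm:optimal-linear-agent}: start from the prediction-optimal solution $\hat\theta_t = F\bar\theta_t + \epsilon_t$ guaranteed by Theorem~\ref{thm:ss-ccp}, and exhibit a linear Gaussian incremental update that reproduces it, taking the agent state to be $S_t=\hat\theta_t$. The essential new difficulty relative to Theorem~\ref{thm:optimal-linear-agent} is that $F$ is generally \emph{singular} in the steady state: whenever some directions receive no capacity ($B_i=0$ for $i>m$), the last $d-m$ diagonal entries defining $F$ and $\Psi$ in equation~\ref{eqn:F_Psi_ss} vanish, so the invertibility hypothesis used there fails. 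The resolution is to observe that both $F$ and $\Psi$ are supported on $\mathrm{range}(U_1)$, i.e. $F = U_1 D_1 U_1^\top$ and $\Psi = U_1 D_\Psi^{(1)} U_1^\top$ with $D_1=\mathrm{diag}(1-\exp(-2B_i))_{i\le m}$ invertible, and to carry out the whole construction inside this $m$-dimensional subspace.

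First I would introduce the retained coordinates $z_t = U_1^\top\bar\theta_t\in\Re^m$. Since $\hat\theta_t$ lives in $\mathrm{range}(U_1)$, writing $\epsilon_t = U_1\delta_t$ with $\delta_t\sim N(0,D_\Psi^{(1)})$ gives $U_1^\top\hat\theta_t = D_1 z_t + \delta_t$, so $z_t$ is recoverable from $\hat\theta_t$ up to the fixed-covariance noise $\delta_t$. The role of condition~(A) is to close the dynamics of $z_t$: the inclusion $\mathrm{range}(K^\top U_1)\subseteq\mathrm{range}(U_1)$ is equivalent to $U_1^\top K = (U_1^\top K U_1)U_1^\top$, so applying $U_1^\top$ to the steady-state Kalman recursion $\bar\theta_{t+1}=K\bar\theta_t+LY_{t+1}$ yields the autonomous recursion $z_{t+1} = \tilde K z_t + U_1^\top L Y_{t+1}$ with $\tilde K := U_1^\top K U_1$. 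This is the crux: without (A), the discarded $U_2$-components would feed back into the retained coordinates through $K$, and the agent, having thrown those components away, could not propagate $z_{t+1}$ from $\hat\theta_t$ alone.

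Next I would propose the incremental update $\hat\theta_{t+1} = U_1 D_1\tilde K D_1^{-1}U_1^\top\hat\theta_t + U_1 D_1 U_1^\top L Y_{t+1} + \epsilon'_{t+1}$, with $\epsilon'_{t+1}=U_1\delta'_{t+1}$ a freshly drawn independent Gaussian. Substituting $U_1^\top\hat\theta_t = D_1 z_t + \delta_t$ together with the recursion for $z_{t+1}$ shows this equals $F\bar\theta_{t+1}+\epsilon_{t+1}$ with $\epsilon_{t+1}=U_1(D_1\tilde K D_1^{-1}\delta_t + \delta'_{t+1})$; hence the update reproduces the prediction solution exactly provided the marginal of $\epsilon_{t+1}$ is $N(0,\Psi)$, i.e. provided $\Cov[D_1\tilde K D_1^{-1}\delta_t]\le D_\Psi^{(1)}$, which lets $\delta'_{t+1}$ be sampled with the complementary (PSD) covariance. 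The coupling $\delta_{t+1}=D_1\tilde K D_1^{-1}\delta_t+\delta'_{t+1}$ is admissible because Theorem~\ref{thm:optimal_cc_prediction} permits the $\epsilon_t$ to be correlated across time. It then remains to verify that this PSD inequality is exactly condition~(B): using the diagonal identity $D_1^{-1}D_\Psi^{(1)}D_1^{-1}=D$ (so $D_\Psi^{(1)}=D_1 D D_1$) and conjugating by $D_1^{-1}$, the inequality collapses to $\tilde K D\tilde K^\top\le D$, which is precisely $U_1^\top K U_1\, D\, U_1^\top K^\top U_1\le D$.

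Finally, for optimality I would note that {\tt $\mathtt{C^3L}$-LQG} is a constrained version of {\tt $\mathtt{C^2P}$-LQG}: for any feasible continual-learning agent, $\hat\theta_t\perp H_t\mid S_t$ and the data-processing inequality give $\I(\hat\theta_t;H_t)\le\I(S_t;H_t)\le B$, so its cost is at least the steady-state prediction optimum of Theorem~\ref{thm:ss-ccp}. The agent just constructed is linear Gaussian, satisfies $\I(\hat\theta_t;H_t)=\I(\hat\theta_t;\bar\theta_t)\le B$ (since $\hat\theta_t\perp H_t\mid\bar\theta_t$), and attains that optimum; hence it is optimal and the two costs coincide. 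I expect the genuine obstacle to be the subspace reduction, namely recognizing that condition~(A) is exactly what makes the retained coordinates evolve autonomously and thereby substitutes for the invertibility of $F$ assumed in Theorem~\ref{thm:optimal-linear-agent}; once the reduction is in place, the covariance bookkeeping that produces condition~(B) is routine.
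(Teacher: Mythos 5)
Your proposal is correct and follows essentially the same route as the paper's proof: restrict to the retained subspace via $U_1$, use condition (A) to make the coordinates $U_1^\top\bar{\theta}_t$ evolve autonomously under the Kalman recursion, exhibit the incremental update with freshly sampled complementary noise, and reduce the resulting covariance inequality to condition (B) through the identity $D = D_{F,1}^{-1} D_{\Psi,1} D_{F,1}^{-1}$. The only difference is that you spell out the data-processing lower bound establishing that the prediction optimum bounds the continual-learning optimum, which the paper leaves implicit in its relaxation structure.
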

\begin{proof}
To simplify the exposition, define diagonal matrices $D_{F, 1}, D_{\Psi, 1} \in \Re^{m \times m}$ as
\begin{align}
    D_{F,1} = & \, \mathrm{diag} \left( 1 - \exp(-2B_{1}),  \ldots, 1 - \exp(-2B_{m}) \right) \nonumber \\
    D_{\Psi, 1} = & \, \mathrm{diag} \left(
\left[1 - \exp(-2B_{1}) \right] \exp(-2B_{1}) \lambda_{1}, 
\cdots,
\left[1 - \exp(-2B_{m}) \right] \exp(-2B_{m}) \lambda_{m}
\right) . \nonumber
\end{align}
It is straightforward to verify that
$\hat{\theta}_t = U_1 D_{F, 1} U_1^\top \bar{\theta}_t + \epsilon_t$ and $\hat{\theta}_t = U_1 U_1^\top \hat{\theta}_t$. Consequently, we have
\[
U_1^\top \hat{\theta}_t = D_{F, 1} U_1^\top \bar{\theta}_t + U_1^\top \epsilon_t, \quad \text{and} \quad \Cov \left [U_1^\top \epsilon_t \right] = D_{\Psi, 1}.
\]
From the Kalman filter, we also have
\[
U_1^\top \bar{\theta}_{t+1} = U_1^\top K \bar{\theta}_t +  U_1^\top L Y_{t+1}
= U_1^\top K \left[U_1 U_1^\top \bar{\theta}_t + U_2 U_2^\top \bar{\theta}_t \right]+  U_1^\top L Y_{t+1}.
\]
Recall that $\mathrm{range}(U_1) \perp \mathrm{range}(U_2)$, since $\mathrm{range}(K^\top U_1) \subseteq \mathrm{range}(U_1)$ by condition (A) in the theorem, we have $\mathrm{range}(K^\top U_1) \perp \mathrm{range}(U_2)$. So we have $U_1^\top K U_2 = 0$. Thus
\[
U_1^\top \bar{\theta}_{t+1} 
= U_1^\top K U_1 U_1^\top \bar{\theta}_t +  U_1^\top L Y_{t+1}.
\]
Since $U_1^\top \bar{\theta}_t = D_{F,1}^{-1} \left[ U_1^\top \hat{\theta}_t - U_1^\top \epsilon_t \right]$, we have
\[
D_{F, 1} U_1^\top \bar{\theta}_{t+1} 
= D_{F, 1} U_1^\top K U_1 D_{F,1}^{-1} \left[ U_1^\top \hat{\theta}_t - U_1^\top \epsilon_t \right]  +  D_{F, 1} U_1^\top L Y_{t+1}, 
\]
that is
\[
D_{F, 1} U_1^\top K U_1 D_{F,1}^{-1}  U_1^\top \hat{\theta}_t   +  D_{F, 1} U_1^\top L Y_{t+1} = D_{F, 1} U_1^\top  \bar{\theta}_{t+1} + D_{F, 1} U_1^\top K U_1 D_{F,1}^{-1}  U_1^\top \epsilon_t .  
\]
Since $U_1^\top \hat{\theta}_{t+1} = D_{F, 1} U_1^\top \bar{\theta}_{t+1} + U_1^\top \epsilon_{t+1}$, thus, as long as
\begin{equation}
\label{eqn:ss-incremental-update-condition}
\Cov \left[D_{F, 1} U_1^\top K U_1 D_{F,1}^{-1} U_1^\top \epsilon_t \right] \leq \Cov \left[ U_1^\top \epsilon_{t+1} \right] = D_{\Psi, 1},
\end{equation}
we can incrementally compute $U_1^\top \hat{\theta}_{t+1}$ as
\[
U_1^\top \hat{\theta}_{t+1} = D_{F, 1} U_1^\top K U_1 D_{F,1}^{-1}  U_1^\top \hat{\theta}_t   +  D_{F, 1} U_1^\top L Y_{t+1} + \tilde{\epsilon}_{t+1},
\]
with $\tilde{\epsilon}_{t+1}$ independently sampled from $N \left(0, D_{\Psi, 1} - \Cov \left[D_{F, 1} U_1^\top K U_1 D_{F,1}^{-1} U_1^\top \epsilon_t \right] \right)$.
Notice that condition~\ref{eqn:ss-incremental-update-condition} is equivalent to
\[
 U_1^\top K U_1 D_{F, 1}^{-1} D_{\Psi, 1} D_{F, 1}^{-1} U_1^\top K^\top U_1 \leq D_{F, 1}^{-1} D_{\Psi, 1}  D_{F, 1}^{-1}.
\]
Note that $D = D_{F, 1}^{-1} D_{\Psi, 1}  D_{F, 1}^{-1}$, this is equivalent to condition (B) in the theorem. This concludes the proof.
\end{proof}

\begin{remark}
We now briefly motivate and explain the two conditions in Theorem~\ref{thm:ss-ccc}. Note that from Kalman filtering, $\bar{\theta}_t$ admits linear incremental updates (see equation~\ref{eqn:kalman_update_short}). Moreover, from Theorem~\ref{thm:ss-ccp}, $\hat{\theta}_t = F \bar{\theta}_t + \epsilon_t$ is a linear transformation of $\bar{\theta}_t$, with additive Gaussian noises. To ensure that $\hat{\theta}_t$ also admits an incremental update, we need to ensure that (1) $F \bar{\theta}_t$ admits an incremental update and (2) the noise ``magnitude" of $\epsilon_t$, measured in covariance matrix, decays after the incremental update. Note that (1) does not always hold since $F$ is not full-rank in general. Roughly speaking, condition (A) in Theorem~\ref{thm:ss-ccc} ensures (1) while condition (B) ensures (2).  
\end{remark}

\begin{remark}
It is straightforward to rewrite the conditions of Theorem~\ref{thm:ss-ccc} as follows: there exists a matrix $G \in \Re^{m \times m}$ s.t. $K^\top U_1 = U_1 G$ and $G^\top D G \leq D$.
\end{remark}

\subsection{Diagonal systems}

Before concluding this section, we would like to show that the conditions in Theorem~\ref{thm:ss-ccc} always hold in diagonal systems. Specifically, in a diagonal system, 
$A = \diag (a_1, a_2, \ldots, a_d)$, $C = \diag (c_1, c_2, \ldots, c_d)$, $\Sigma_\omega = \diag (\sigma^2_{\omega,1}, \sigma^2_{\omega,2}, \ldots, \sigma^2_{\omega,d})$, and 
$\Sigma_\nu = \diag (\sigma^2_{\nu,1}, \sigma^2_{\nu,2}, \ldots, \sigma^2_{\nu,d})$ are all diagonal matrices. To ensure that $A$ is asymptotically stable, we require $a_i^2 < 1$ for all $i$.

It is straightforward to show that $P$, $K$ and $L$ are also diagonal matrices. Moreover, $K = \diag (k_1, k_2, \ldots, k_d)$ with
\[
k_i = \left(1 - \frac{p_i c_i^2}{p_i c_i^2 + \sigma_{\nu, i}^2} \right) a_i = \frac{\sigma_{\nu, i}^2}{p_i c_i^2 + \sigma_{\nu, i}^2} a_i,
\]
where $p_i$ is the $i$-th diagonal element in the diagonal matrix $P$. Note that $|k_i| \leq |a_i| < 1$ for all $i$.

Let $e_i \in \Re^d$ denote the vector whose $i$-th element is $1$ and all other elements are zero. Without loss of generality, we assume that $U_1 = [e_1, \ldots, e_m]$. It is straightforward to verify that 
\[
K^\top U_1 = K U_1 = [k_1 e_1, k_2 e_2, \ldots k_m e_m] \in \Re^{d \times m} ,
\]
thus we have $\mathrm{range}(K^\top U_1) \subseteq \mathrm{range}(U_1)$. Moreover, the condition 
$U_1^\top K U_1 D U_1^\top K^\top U_1 \leq D$ reduces to
\[
|k_i| \leq 1 \qquad \forall i=1,2,\ldots, m,
\]
which has been proven above.

\section{Optimal capacity allocation}
\label{sec:optimal_capacity_allocation}

In this section, we consider problems that can be decomposed into a set of sub-problems, and study how to optimally allocate capacity across sub-problems. We illustrate this allocation through small computational examples.

\subsection{Optimal asymptotic cost in scalar systems}
\label{sec:ss-scalar}

We first compute the optimal asymptotic cost for the scalar case with $d=m=1$. Note that for the scalar case, we have $F=1-\exp(-2B)$ and $\Psi = [\exp(-2B)-\exp(-4B)] [\Sigma-M]$. On the other hand, we have
\[
\E \left[ (\theta_t - \hat{\theta}_t)^2 \right] = \E \left[ (\theta_t - \bar{\theta}_t)^2 \right] +
\E \left[ (\bar{\theta}_t - \hat{\theta}_t)^2 \right].
\]
Moreover, as $t \rightarrow \infty$, we have
\[
\E \left[ (\theta_t - \bar{\theta}_t)^2 \right] = \E \left[ \E \left [ (\theta_t - \bar{\theta}_t)^2 \middle | \, H_t \right] \right] = \E \left[ M \right] = M.
\]
On the other hand, as $t \rightarrow \infty$, we have
\begin{align}
\E \left[ (\bar{\theta}_t - \hat{\theta}_t)^2 \right] = & \, \E \left[ (\bar{\theta}_t - F \bar{\theta}_t - \epsilon_t)^2 \right] = \E \left[ ( \exp(-2B) \bar{\theta}_t - \epsilon_t)^2 \right] \nonumber \\
=& \, \exp(-4B) [\Sigma-M] + \Psi \nonumber \\
=& \, \exp(-4B) [\Sigma-M] + [\exp(-2B)-\exp(-4B)] [\Sigma-M] \nonumber \\
=& \, \exp(-2B)[\Sigma-M].
\end{align}
Consequently, the optimal total cost is $M + \exp(-2B)[\Sigma-M]$. 

We now explicitly compute $\Sigma$ and $M$ in the scalar case. Note that in the scalar case, the Lyapunov equation is $\Sigma = A^2 \Sigma + \Sigma_\omega$ and the Riccati equation is
\[
A^2 P - P + \Sigma_\omega - \frac{A^2 C^2 P^2}{C^2P +\Sigma_\nu} = 0.
\]
Consequently, $\Sigma = \frac{\Sigma_\omega}{1-A^2}$ and 
\begin{equation}
\label{eqn:scalar_ss_P}
P = \frac{C^2 \Sigma_\omega + (A^2-1) \Sigma_\nu + \sqrt{\left(C^2 \Sigma_\omega + (A^2-1) \Sigma_\nu \right)^2 + 4 C^2 \Sigma_\omega \Sigma_\nu}}{2 C^2}.
\end{equation}
Moreover, note that $P=A^2M+\Sigma_\omega$, thus we have
\begin{equation}
\label{eqn:scalar_ss_M}
M = \frac{ (A^2-1) \Sigma_\nu -C^2 \Sigma_\omega + \sqrt{\left [  (A^2-1) \Sigma_\nu - C^2 \Sigma_\omega \right ]^2 + 4 A^2 C^2 \Sigma_\omega \Sigma_\nu}}{2 A^2 C^2}.
\end{equation}
Thus, we have the following result:
\begin{corollary}
For the scalar case with $d=m=1$, the optimal asymptotic cost is
\[
M + \exp(-2B)[\Sigma-M],
\]
where $\Sigma = \frac{\Sigma_\omega}{1-A^2}$ and $M$ is defined in equation~\ref{eqn:scalar_ss_M}.
\end{corollary}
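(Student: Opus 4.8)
The plan is to observe that this corollary merely consolidates the computations already carried out in Section~\ref{sec:ss-scalar}, so almost no new work is required. First I would establish the cost expression $M + \exp(-2B)[\Sigma - M]$. This can be obtained in two equivalent ways. The direct route is the bias--variance decomposition $\E[(\theta_t - \hat{\theta}_t)^2] = \E[(\theta_t - \bar{\theta}_t)^2] + \E[(\bar{\theta}_t - \hat{\theta}_t)^2]$, together with the steady-state identities $\E[(\theta_t - \bar{\theta}_t)^2] \to M$ and $\E[(\bar{\theta}_t - \hat{\theta}_t)^2] \to \exp(-2B)[\Sigma - M]$, using the scalar values $F = 1 - \exp(-2B)$ and $\Psi = [\exp(-2B) - \exp(-4B)][\Sigma-M]$. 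Alternatively, I would specialize Theorem~\ref{thm:ss-ccp}: in the scalar case $d=1$, the single eigenvalue is $\lambda_1 = \Sigma - M$, and solving $[\log(2\lambda_1/\eta)]^+ = 2B$ gives $B_1 = B$, so the formula $\trace(M) + \sum_i \exp(-2B_i)\lambda_i$ collapses to exactly $M + \exp(-2B)[\Sigma-M]$. Since the sufficient conditions of Theorem~\ref{thm:optimal-linear-agent} always hold in the scalar case (Section~\ref{sec:scalar_case}), this is simultaneously the asymptotic cost of the continual-learning problem.

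The remaining step is to make $\Sigma$ and $M$ explicit. The scalar Lyapunov equation $\Sigma = A^2\Sigma + \Sigma_\omega$ is linear and gives $\Sigma = \Sigma_\omega/(1-A^2)$ immediately (using asymptotic stability $A^2 < 1$). For $M$, I would first solve the scalar Riccati equation $A^2 P - P + \Sigma_\omega - A^2 C^2 P^2/(C^2 P + \Sigma_\nu) = 0$, which after clearing the denominator becomes a quadratic in $P$; taking the positive root yields equation~\ref{eqn:scalar_ss_P}. Then I would invoke the steady-state relation $P = A^2 M + \Sigma_\omega$ (equivalently $M = (P - \Sigma_\omega)/A^2$) and substitute.

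The one point deserving care is verifying that $(P - \Sigma_\omega)/A^2$ equals the clean closed form in equation~\ref{eqn:scalar_ss_M}. Substituting equation~\ref{eqn:scalar_ss_P} gives
\[
M = \frac{(A^2-1)\Sigma_\nu - C^2\Sigma_\omega + \sqrt{(C^2\Sigma_\omega + (A^2-1)\Sigma_\nu)^2 + 4C^2\Sigma_\omega\Sigma_\nu}}{2A^2C^2},
\]
so the only thing to check is that the two discriminants agree, i.e.
\[
(C^2\Sigma_\omega + (A^2-1)\Sigma_\nu)^2 + 4C^2\Sigma_\omega\Sigma_\nu = [(A^2-1)\Sigma_\nu - C^2\Sigma_\omega]^2 + 4A^2C^2\Sigma_\omega\Sigma_\nu.
\]
Expanding both sides, the cross terms differ by $4C^2\Sigma_\omega(A^2-1)\Sigma_\nu$ while the pure noise terms differ by $4C^2\Sigma_\omega\Sigma_\nu(1-A^2)$, and these cancel exactly. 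This completing-the-square identity is the only genuine (and entirely routine) algebra in the argument; everything else is bookkeeping that follows directly from the steady-state prediction analysis.
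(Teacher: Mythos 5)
Your proposal is correct and follows essentially the same route as the paper: the bias--variance decomposition with the scalar values $F = 1-\exp(-2B)$ and $\Psi = [\exp(-2B)-\exp(-4B)][\Sigma-M]$, giving steady-state cost $M + \exp(-2B)[\Sigma-M]$, followed by solving the scalar Lyapunov and Riccati equations and using $P = A^2 M + \Sigma_\omega$ to obtain the closed forms for $\Sigma$ and $M$. Your explicit verification that the two discriminants agree is algebra the paper leaves implicit (and your alternative derivation via Theorem~\ref{thm:ss-ccp} is a valid shortcut), but neither changes the substance of the argument.
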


\subsection{Optimal capacity allocation in diagonal systems}
\label{sec:diagonal}

We now consider the optimal capacity allocation in diagonal systems. Recall that in a diagonal system, 
$A = \diag (a_1, a_2, \ldots, a_d)$, $C = \diag (c_1, c_2, \ldots, c_d)$, $\Sigma_\omega = \diag (\sigma^2_{\omega,1}, \sigma^2_{\omega,2}, \ldots, \sigma^2_{\omega,d})$, and 
$\Sigma_\nu = \diag (\sigma^2_{\nu,1}, \sigma^2_{\nu,2}, \ldots, \sigma^2_{\nu,d})$ are all diagonal matrices. Based on result in Section~\ref{sec:ss-scalar}, for each $i=1,2, \ldots, d$, we define
\begin{align}
    \Sigma_i =& \, \frac{\sigma^2_{\omega, i}}{1-a_i^2} \nonumber \\
    M_i =& \, \frac{ (a_i^2-1) \sigma^2_{\nu,i} -c_i^2 \sigma^2_{\omega,i} + \sqrt{\left [  (a_i^2-1) \sigma^2_{\nu, i} - c_i^2 \sigma^2_{\omega,i} \right ]^2 + 4 a_i^2 c_i^2 \sigma^2_{\omega,i} \sigma^2_{\nu,i}}}{2 a_i^2 c_i^2}
\end{align}
for $i=1,2,\ldots, d$. Thus, the optimal capacity allocation in a diagonal system with a given capacity $B$ can be obtained by solving the following problem:
\begin{align}
    \min_{\mathbf{B}} \quad & \, \sum_{i=1}^d \exp(-2 B_i) \left[ \Sigma_i - M_i \right] \\
    \mathrm{s.t.} \quad & \, \sum_{i=1}^d B_i = B \nonumber \\ & \, B_i \geq 0 \quad \forall i=1,2, \ldots, d \nonumber 
\end{align}
where $\mathbf{B}=(B_1, B_2, \ldots, B_d)$. Note that this is a strictly convex optimization problem.  Let $\mathbf{B}^*=(B_1^*, B_2^*, \ldots, B_d^*)$ denote the optimal solution to this problem, then the optimal asymptotic cost is
\[
\sum_{i=1}^d \left[M_i + \exp(-2B_i^*)[\Sigma_i-M_i] \right].
\]

\subsection{Experiment results}
\label{sec:experiment}

We now demonstrate some experiment results for optimal capacity allocation in diagonal systems.
We consider three different cases: in the first case, the subsystems in the diagonal system differ in the system mixing time; in the second case, the subsystems differ in the magnitudes of the system noises; in the last case, the subsystems differ in the observation signal-to-noise (SNR) ratios. The optimal capacity allocations are respectively illustrated in Figures~\ref{fig:case_1_alloc}, \ref{fig:case_2_alloc} and \ref{fig:case_3_alloc}, while the optimal asymptotic costs are illustrated in Figures~\ref{fig:case_1_cost}, \ref{fig:case_2_cost}, and \ref{fig:case_3_cost}.

% In this section, we consider the optimal capacity allocation across different subsystems. We consider three different cases. In the first case, the system consists of three scalar subsystems, each of them has a different mixing time. In the second case, the system also consists of three scalar subsystems, and each of them has a different signal-to-noise ratio (SNR). In the third case, the system consists of a two-dimensional subsystem and a one-dimensional subsystem. The optimal capacity allocation is illustrated in Figure~\ref{fig:p1}, \ref{fig:p2}, and \ref{fig:p3}.

\subsubsection*{Case 1: Different system mixing times}

We first consider a setting where the subsystems differ in their system mixing times. Specifically, we choose 
 $A=\mathrm{diag}([0.99 I_{100}, 0.95 I_{100}, 0.9 I_{100}])$, $C=I_{300}$, $\Sigma_\omega=I_{300}$, $\Sigma_\nu=I_{300}$, where $I_{100}$ is an identify matrix with dimension $d=300$ and $I_{100}$ is defined similarly. Note that all the scalar subsystems in this diagonal system have the same $c_i=1$, $\sigma^2_{\omega, i}=1$, and $\sigma^2_{\nu, i}=1$, but have different $a_i$'s. Note that larger $a_i$ (in magnitude) corresponds to longer system mixing time. Note that optimal capacity allocation is illustrated in Figure~\ref{fig:case_1_alloc} while the optimal asymptotic cost is illustrated in Figure~\ref{fig:case_1_cost}.

\begin{figure}[ht]
    \centering
    \includegraphics[scale=0.7]{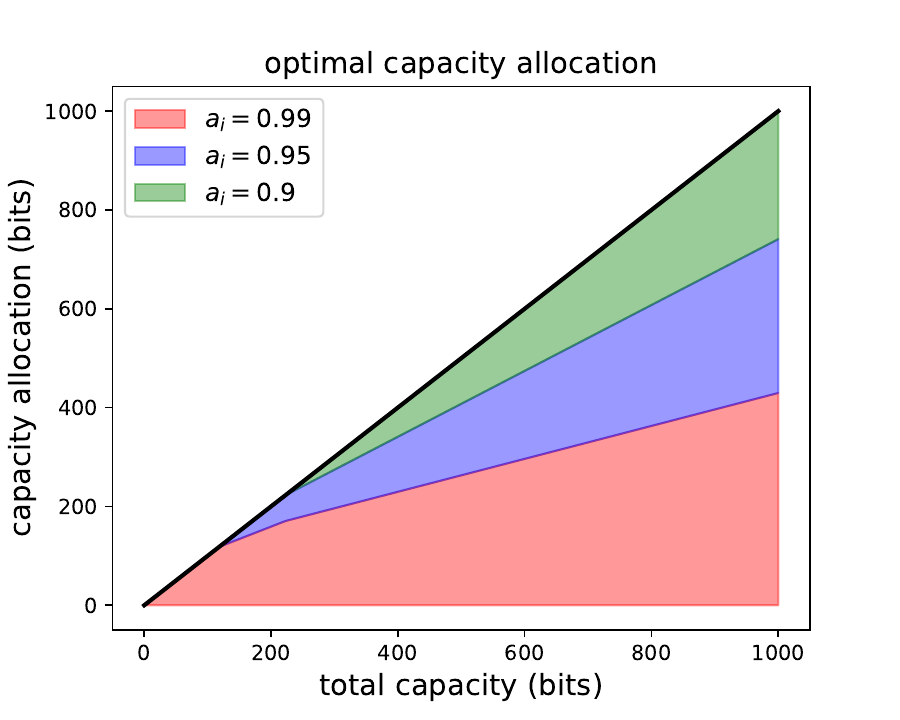}
    \caption{Optimal capacity allocation for Case 1}
    \label{fig:case_1_alloc}
\end{figure}

\begin{figure}[ht]
    \centering
    \includegraphics[scale=0.7]{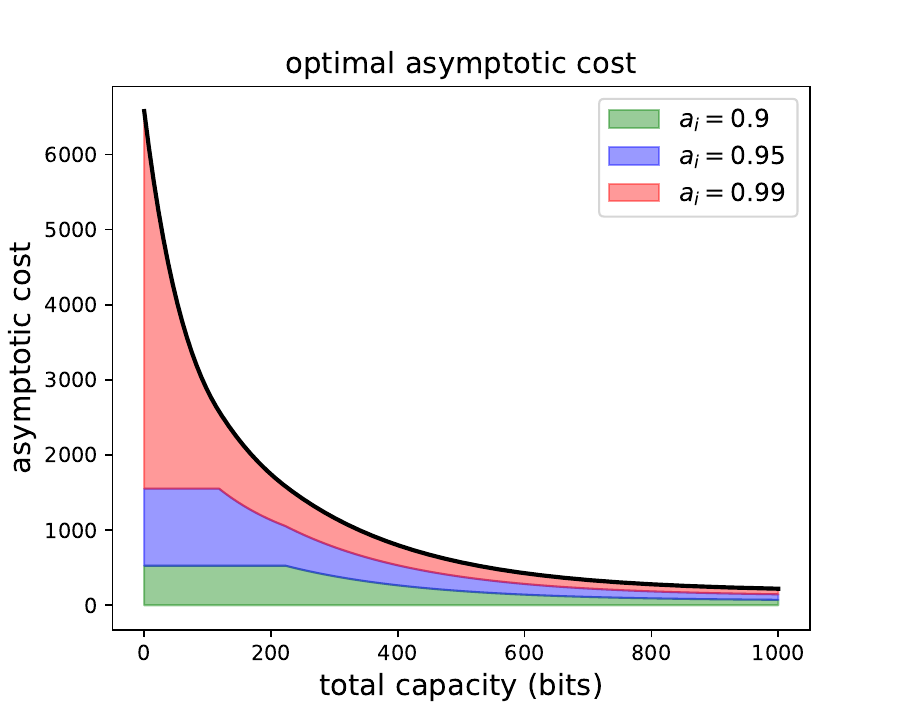}
    \caption{Optimal asymptotic loss for Case 1}
    \label{fig:case_1_cost}
\end{figure}

\subsubsection*{Case 2: Different system noise magnitudes}

We then consider a setting where the subsystems differ in the magnitudes of the system noises, measured in the variance. Specifically, we choose
$A=0.95 I_{300}$, $C=I_{300}$, $\Sigma_\omega=\mathrm{diag}([10 I_{100}, 3 I_{100}, I_{100}])$, and $\Sigma_\nu=I_{300}$. Note that all scalar subsystems have the same $a_i=0.95$, $c_i=1$, $\sigma_{\nu, i}^2=1$, but have different $\sigma^2_{\omega,i}$'s. Note that larger $\sigma^2_{\omega,i}$ incurs larger asymptotic cost.
The optimal capacity allocation in this case is illustrated in Figure~\ref{fig:case_2_alloc} while the optimal asymptotic cost is illustrated in Figure~\ref{fig:case_2_cost}.

\begin{figure}[ht]
    \centering
    \includegraphics[scale=0.7]{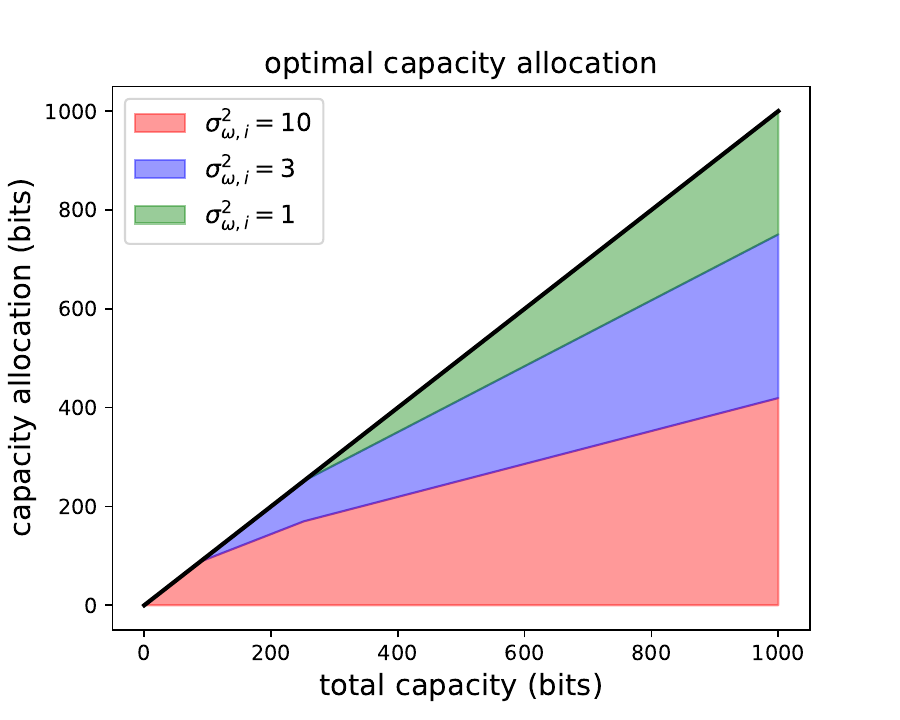}
    \caption{Optimal capacity allocation for Case 2}
    \label{fig:case_2_alloc}
\end{figure}

\begin{figure}[ht]
    \centering
    \includegraphics[scale=0.7]{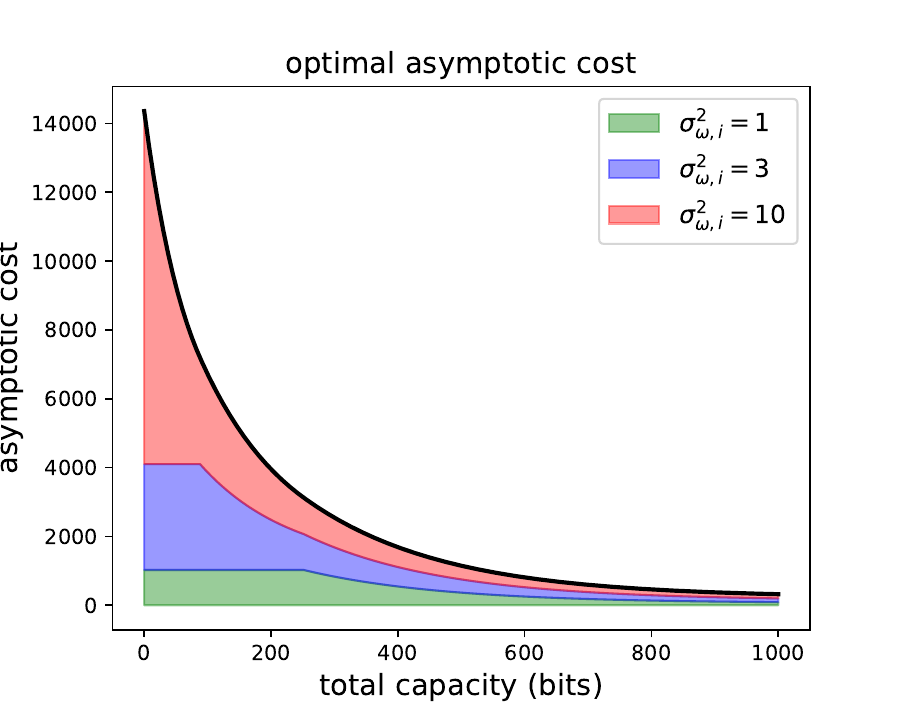}
    \caption{Optimal asymptotic loss for Case 2}
    \label{fig:case_2_cost}
\end{figure}

\subsubsection*{Case 3: Different observation signal-to-noise ratios}

We then consider a setting with different observation signal-to-noise (SNR) ratios. Specifically, we choose
$A =0.95I_{300}$, $C = \mathrm{diag}([I_{100}, I_{100}, 0.25 I_{100}])$,
$\Sigma_\omega = I_{300}$, and $\Sigma_\nu = \mathrm{diag}([0.1 I_{100}, 10 I_{100}, 10 I_{100}])$. Note that all scalar subsystems have the same system dynamics, but different observation SNRs.
Note that the optimal capacity allocation is illustrated in Figure~\ref{fig:case_3_alloc} while the optimal asymptotic cost is illustrated in Figure~\ref{fig:case_3_cost}.

\begin{figure}[ht]
    \centering
    \includegraphics[scale=0.7]{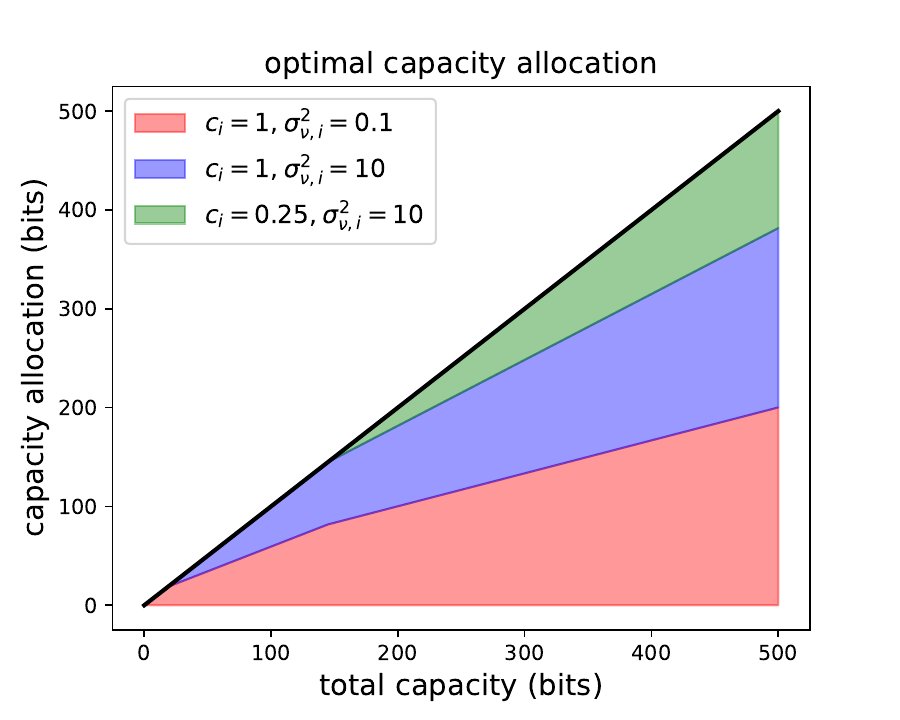}
    \caption{Optimal capacity allocation for Case 3}
    \label{fig:case_3_alloc}
\end{figure}

\begin{figure}[ht]
    \centering
    \includegraphics[scale=0.7]{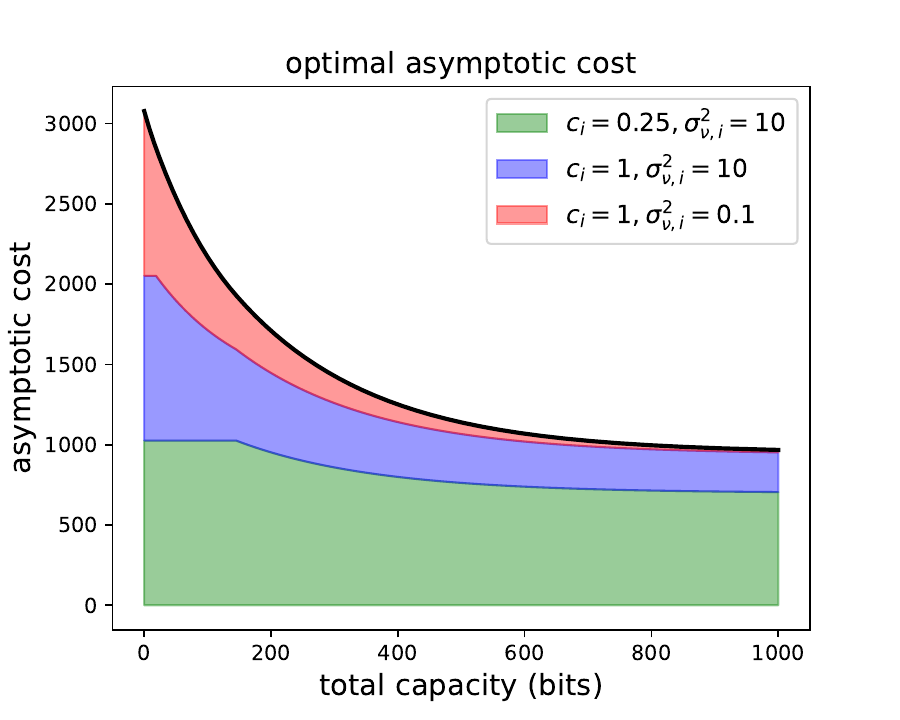}
    \caption{Optimal asymptotic loss for Case 3}
    \label{fig:case_3_cost}
\end{figure}

\subsubsection*{Case 4: Block-Diagonal Systems}

Finally, we consider a block-diagonal setting. Specifically, we choose $A = I_{100} \bigotimes \bar{A}$, where $\bigotimes$ denotes the Kronecker product and $\bar{A} \in \Re^{3 \times 3}$ is chosen as 
\begin{equation}
    \bar{A} = \left [
    \begin{array}{ccc}
       0.95  & 0.05 &  0 \\
       0  &  0.95 &  0 \\
       0 &  0 &  0.95
    \end{array}
    \right].
\end{equation}
We also choose $C = I_{300}$, $\Sigma_\omega=I_{300}$, and $\Sigma_\nu = I_{300}$. Note that $A$ is a block-diagonal matrix with $100$ $2 \times 2$ diagonal blocks and another $100$ $1 \times 1$ diagonal blocks.
Since $C$, $\Sigma_\omega$, and $\Sigma_\nu$ are also block-diagonal, this case is referred to as a block-diagonal case. Moreover, 
it is straightforward to show that $P$, $K$, and $L$ are also block-diagonal matrices.

\begin{figure}[ht]
    \centering
    \includegraphics[scale=0.7]{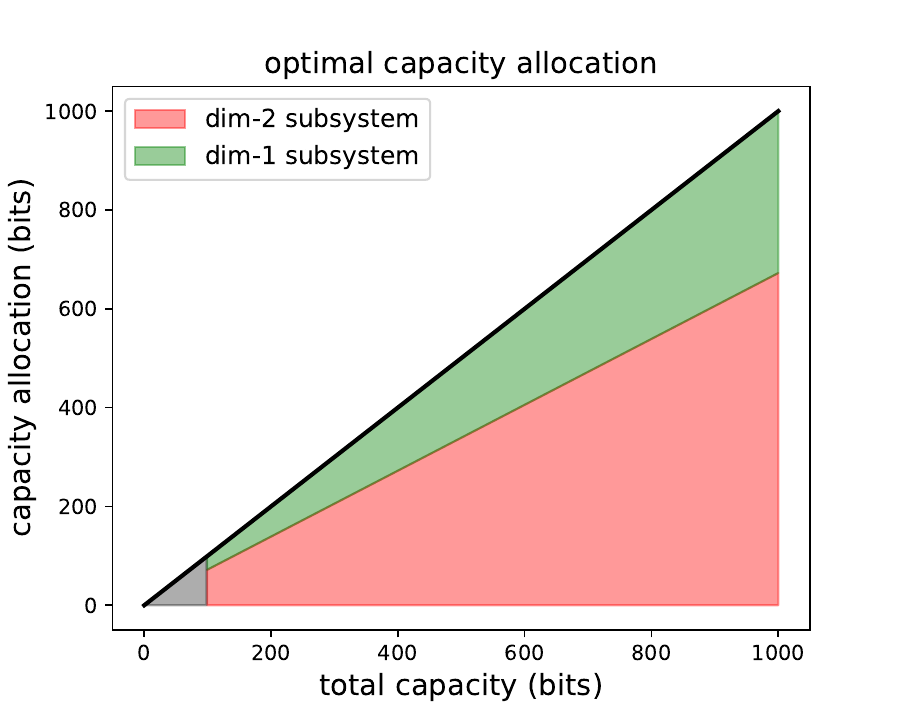}
    \caption{Optimal capacity allocation for Case 4}
    \label{fig:case_4_allocation}
\end{figure}

Figure~\ref{fig:case_4_allocation} plots the optimal capacity allocation across subsystems. Specifically, the red band corresponds to the capacity allocated to the dimension-2 subsystems, while the green band corresponds to the capacity allocated to the dimension-1 subsystems. Note that when the total capacity is small, the sufficient conditions of Theorem~\ref{thm:ss-ccc} do not hold. Thus, we do not know what the optimal capacity allocation is in such cases. In Figure~\ref{fig:case_4_allocation}, we use a grey band to denote such cases.

\begin{figure}[ht]
    \centering
    \includegraphics[scale=0.7]{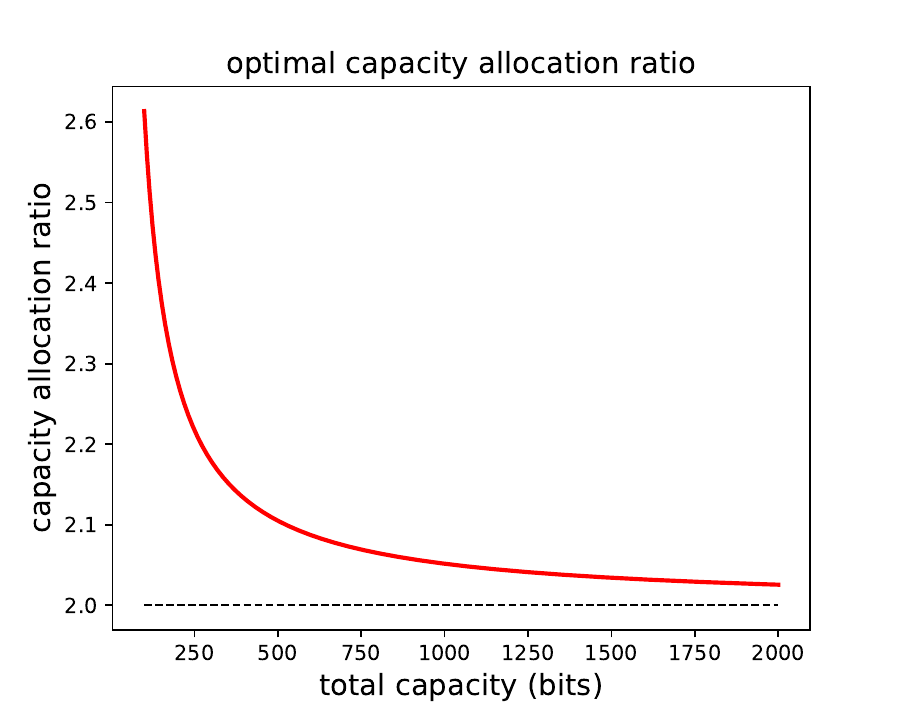}
    \caption{Optimal capacity allocation ratio for Case 4}
    \label{fig:case_4_allocation_ratio}
\end{figure}

Figure~\ref{fig:case_4_allocation_ratio} plot the ratio of the capacity allocated to dimension-2 subsystems to the capacity allocated to dimension-1 subsystems.
Note that in matrix $A$, the dimension-2 blocks and dimension-1 blocks have the same diagonal elements; however, the dimension-2 blocks also have an off-diagonal coupling. Intuitively, this off-diagonal coupling can help to reduce the cost under a given capacity, since it helps to share the information between two state variables. This has been justified in Figure~\ref{fig:case_4_allocation_ratio}. As we can see, the capacity ratio is always larger than $2$. Also, the smaller the total capacity is, the larger the capacity ratio is.

% \begin{enumerate}
%     \item Different state mixing times: $A=\mathrm{diag}([0.99, 0.95, 0.9])$, $C=I$, $\Sigma_\omega=I$, $\Sigma_\nu=I$
%     \begin{figure}[ht]
%         \centering
%         \includegraphics[scale=0.7]{figures/p1.png}
%         \caption{Optimal capacity allocation for Case 1}
%         \label{fig:p1}
%     \end{figure}
%     \item Different signal-to-noise ratios: $A=\mathrm{diag}([0.99, 0.99, 0.99])$, $C= \mathrm{diag}([10, 1, 0.1])$, $\Sigma_\omega=I$, $\Sigma_\nu=I$
%     \begin{figure}[ht]
%         \centering
%         \includegraphics[scale=0.7]{figures/p2.png}
%         \caption{Optimal capacity allocation for Case 2}
%         \label{fig:p2}
%     \end{figure}
%     \item Coupled modes: $A=\left[ \begin{array}{ccc}
%       0.95  & 0.05 & 0  \\
%         0 & 0.95 & 0 \\
%         0 & 0 & 0.95
%     \end{array}\right]$, $C=I$, $\Sigma_\omega=I$, $\Sigma_\nu=I$
%     \begin{figure}[ht]
%         \centering
%         \includegraphics[scale=0.7]{figures/p3.png}
%         \caption{Optimal capacity allocation for Case 3}
%         \label{fig:p3}
%     \end{figure}
% \end{enumerate}

\section{Concluding Remarks}
\label{sec:conclusion}

The goal of this paper is to shed some light on the question how agents with limited capacity should allocate their resources for optimal performance. In particular, we study a simple yet relevant continual learning problem: the capacity-constrained linear-quadratic-Gaussian (LQG) sequential prediction problem.
We have derived a solution to this problem under appropriate technical conditions. Moreover, for problems that can be decomposed into a set of sub-problems, we have also demonstrated how to optimally allocate the capacity across these sub-problems in the steady state. 

We hope the results of this paper will motivate more future work along this research line. In particular, this paper has focused on a capacity-constrained sequential prediction setting. An interesting future direction is to derive solutions for capacity-constrained control/decision problems, such as capacity-constrained reinforcement learning.

% \begin{itemize}
%     \item summarize the contributions of this paper
%     \item briefly discuss the LQG control problem
%     \item discuss capacity-constrained continual learning more generally
% \end{itemize}

% \clearpage
\bibliography{reference}

\clearpage
\appendix

\section{Optimal agent without capacity constraints}
\label{app:without_capacity_constraint}

Let's define $\bar{\theta}_t = \E \left[ \theta_t | H_t \right]$ and $M_t = \Cov \left[ \theta_t | H_t \right]$. Notice that
\begin{align}
    \E \left[ \| \theta_t - \hat{\theta}_t \|_2^2 \, \middle | \, H_t \right ] =& \, \E \left[ \| \theta_t - \bar{\theta}_t + \bar{\theta}_t - \hat{\theta}_t \|_2^2 \, \middle | \, H_t \right ]  \nonumber \\
    =& \, \E \left[ \| \theta_t - \bar{\theta}_t  \|_2^2 \, \middle | \, H_t \right ] + \E \left[ \| \bar{\theta}_t - \hat{\theta}_t  \|_2^2 \, \middle | \, H_t \right ] + 2 
    \E \left[ (\theta_t - \bar{\theta}_t)^\top (\bar{\theta}_t - \hat{\theta}_t ) \, \middle | \, H_t \right ]
    \nonumber \\
    \stackrel{(a)}{=} & \, \E \left[ \| \theta_t - \bar{\theta}_t  \|_2^2 \, \middle | \, H_t \right ] + \E \left[ \| \bar{\theta}_t - \hat{\theta}_t  \|_2^2 \, \middle | \, H_t \right ],
\end{align}
where (a) follows from
\begin{align}
     \E \left[ (\theta_t - \bar{\theta}_t)^\top (\bar{\theta}_t - \hat{\theta}_t ) \, \middle | \, H_t \right ] \stackrel{(b)}{=}& \,  \E \left[ \theta_t - \bar{\theta}_t \, \middle | \, H_t \right]^\top
     \E \left[ \bar{\theta}_t - \hat{\theta}_t  \, \middle | \, H_t \right ] \nonumber \\
     =& \, (\bar{\theta}_t - \bar{\theta}_t) \E \left[ \bar{\theta}_t - \hat{\theta}_t  \, \middle | \, H_t \right ] = 0,
\end{align}
where (b) follows from the fact that $\bar{\theta}_t$ is deterministic given $H_t$, and $\hat{\theta}_t \perp \theta_t \, | \, H_t$. Consequently, the solution for optimal agent without capacity constraints is
\[
\hat{\theta}_t = \bar{\theta}_t = \E [\theta_t | H_t].
\]
Note that $\bar{\theta}_t$ and $M_t$ can be recursively computed by Kalman filtering. In particular, \begin{align}
    \bar{\theta}_{t+1} =& \, A \bar{\theta}_t + P_t C^\top (C P_t C^\top + \Sigma_\nu)^{-1} \left(Y_{t+1} - C A \bar{\theta}_t \right) \nonumber \\
    M_{t+1} =& \, P_t - P_t C^\top (C P_t C^\top + \Sigma_\nu)^{-1} C P_t,
\end{align}
where $P_t = A M_t A^\top + \Sigma_\omega$.

\section{Gaussian distortion-rate function}
\label{app:gaussian-dr}

For any positive-definite $\Sigma \in \Re^{d \times d}$ and any $B>0$, we consider the Gaussian distortion-rate function as:
\begin{align}
    D(B, \Sigma) =& \, \min_{\hat{\theta}} \E \left [ \| \hat{\theta} - \bar{\theta} \|_2^2 \right ] \\
    \text{s.t.} \quad & \, 
    \bar{\theta} \sim N(0, \Sigma) \nonumber \\
    & \,
    \I (\bar{\theta}; \hat{\theta}) \leq B  \nonumber
\end{align}
As we will see, the Gaussian distortion-rate function will play a key role in the analysis. In this appendix, we tried to compute the Gaussian distortion-rate function.

\subsection{Scalar case}
Let's first consider the scalar case where $d=1$ and $\Sigma=\sigma^2$. Then we have
\begin{align}
    B \stackrel{(a)}{\geq} & \, \I (\bar{\theta}; \hat{\theta}) = h(\bar{\theta}) - h (\bar{\theta} \, | \, \hat{\theta} ) = 
    h(\bar{\theta}) - h (\bar{\theta} - \hat{\theta} \, | \, \hat{\theta} ) \nonumber \\
    \stackrel{(b)}{\geq} & \,
    h(\bar{\theta}) - h (\bar{\theta} - \hat{\theta})
    \stackrel{(c)}{\geq} \frac{1}{2} \log \sigma^2 - \frac{1}{2} \log \E \left[(\bar{\theta} - \hat{\theta})^2 \right]
\end{align}
Thus, we have
\[
\E \left[ (\bar{\theta} - \hat{\theta})^2 \right] \geq \exp(-2B) \sigma^2,
\]
and this lower bound can be achieved. To see it, note that inequality (a) holds with equality if $B = \I (\bar{\theta}; \hat{\theta})$, inequality (b) holds with equality if $\bar{\theta} - \hat{\theta} \perp \hat{\theta}$, and inequality (c) holds with equality if $\bar{\theta} - \hat{\theta}$ is zero-mean Gaussian. Assume that
\[
\hat{\theta} = f \bar{\theta} + \epsilon, 
\]
where $\epsilon$ is independently drawn from $N(0, \phi^2)$. Clearly, inequality (c) holds with equality. For inequality (b) to hold with equality, we need 
$
\E \left[ (\bar{\theta} - \hat{\theta}) \hat{\theta} \right ] = 0
$, which is equivalent to
\[
\E \left[ \left( (1-f)\bar{\theta} - \epsilon \right) \left(f \bar{\theta} + \epsilon \right) \right ] = 0,
\]
that is
\[
(1 - f) f \sigma^2 - \phi^2 = 0.
\]
On the other hand, for inequality (a) to hold with equality, we need
\begin{align}
    B =& \, \I(\bar{\theta}, \hat{\theta}) = h(\hat{\theta}) - h(\hat{\theta} | \bar{\theta}) = h(\hat{\theta}) - h(\epsilon) = \frac{1}{2} \log \left( 1 + \frac{f^2 \sigma^2}{\phi^2} \right)
\end{align}
Solving the above equation, we have $f= 1 - \exp(-2B)$ and
$\phi^2 = \left[1 - \exp(-2B) \right] \exp(-2B) \sigma^2$.
Consequently, we have the following lemma:
\begin{lemma}
For the scalar Gaussian case, we have
$D(B, \sigma^2) = \exp(-2B) \sigma^2$.
\end{lemma}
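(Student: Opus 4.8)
The plan is to prove the two matching halves of the distortion-rate identity: a converse lower bound on the achievable distortion, and an explicit achievability construction meeting it. This is the classical scalar Gaussian rate-distortion computation, so the structure of the argument is standard and the only real work is in arranging for a single construction to be simultaneously tight across all the bounds used in the converse.

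For the converse, I would begin from the identity $\I(\bar{\theta};\hat{\theta}) = h(\bar{\theta}) - h(\bar{\theta}\mid\hat{\theta})$, where $h$ denotes differential entropy. The first step is to rewrite $h(\bar{\theta}\mid\hat{\theta}) = h(\bar{\theta}-\hat{\theta}\mid\hat{\theta})$, exploiting translation-invariance of differential entropy under the shift by the conditioned-on quantity. Then conditioning-reduces-entropy gives $h(\bar{\theta}-\hat{\theta}\mid\hat{\theta}) \le h(\bar{\theta}-\hat{\theta})$, and the Gaussian maximum-entropy bound gives $h(\bar{\theta}-\hat{\theta}) \le \tfrac{1}{2}\log\!\big(2\pi e\,\E[(\bar{\theta}-\hat{\theta})^2]\big)$. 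Combining these with $h(\bar{\theta}) = \tfrac{1}{2}\log(2\pi e\,\sigma^2)$ and the capacity constraint $\I(\bar{\theta};\hat{\theta})\le B$ chains into $B \ge \tfrac{1}{2}\log\!\big(\sigma^2/\E[(\bar{\theta}-\hat{\theta})^2]\big)$, i.e. $\E[(\bar{\theta}-\hat{\theta})^2] \ge \exp(-2B)\sigma^2$, establishing $D(B,\sigma^2)\ge\exp(-2B)\sigma^2$.

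For achievability, I would exhibit a linear Gaussian test channel $\hat{\theta} = f\bar{\theta} + \epsilon$ with $\epsilon\sim N(0,\phi^2)$ drawn independently of $\bar{\theta}$, and choose $(f,\phi^2)$ so that every inequality in the converse becomes an equality. The maximum-entropy step is automatically tight because $\bar{\theta}-\hat{\theta}$ is Gaussian; the conditioning step is tight exactly when $\bar{\theta}-\hat{\theta}\perp\hat{\theta}$, which forces the orthogonality condition $(1-f)f\sigma^2 = \phi^2$; and tightness of the information constraint requires $B = \tfrac{1}{2}\log\!\big(1 + f^2\sigma^2/\phi^2\big)$. Solving these two scalar equations together yields $f = 1-\exp(-2B)$ and $\phi^2 = [1-\exp(-2B)]\exp(-2B)\sigma^2$, and a direct computation gives $\E[(\bar{\theta}-\hat{\theta})^2] = (1-f)\sigma^2 = \exp(-2B)\sigma^2$, matching the converse.

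The main obstacle — indeed the only delicate point — is confirming that all three inequalities can be driven to equality by one and the same linear-Gaussian channel, rather than each being achievable only in isolation. This amounts to checking that the orthogonality equation and the information-constraint equation are mutually consistent and yield a valid solution with $\phi^2\ge 0$ for every $B>0$; once this small algebra is verified the upper and lower bounds coincide and the lemma follows.
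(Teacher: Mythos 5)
Your proposal is correct and follows essentially the same argument as the paper's own proof: the identical converse chain (translation invariance, conditioning reduces entropy, Gaussian maximum-entropy bound) yielding $\E[(\bar{\theta}-\hat{\theta})^2] \geq \exp(-2B)\sigma^2$, and the same linear Gaussian test channel $\hat{\theta} = f\bar{\theta} + \epsilon$ with the orthogonality and rate-equality conditions solved to give $f = 1-\exp(-2B)$ and $\phi^2 = [1-\exp(-2B)]\exp(-2B)\sigma^2$. Your closing check that the distortion equals $(1-f)\sigma^2$ is a small but welcome addition the paper leaves implicit.
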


\subsection{Diagonal case}
Let's consider the diagonal case when
$\Sigma = \mathrm{diag}(\sigma_1^2, \sigma_2^2, \ldots, \sigma_d^2 )$.
Then, it is equivalent to the following problem:
\begin{align}
    \label{opt:diagonal}
    \min_{B_1, \ldots, B_d} \quad & \, 
    \sum_{i=1}^d \exp(-2 B_i) \sigma^2_i \\
    \text{s.t.} \quad & \,  \sum_{i=1}^d B_i = B \; \text{and} \; B_i \geq 0 \quad \forall i = 1, 2, \ldots, d  \nonumber 
\end{align}

\noindent
To see that, notice that
\begin{align}
    B \geq & \,\I (\bar{\theta}; \hat{\theta}) = \, h(\bar{\theta}) - h (\bar{\theta} \, | \, \hat{\theta}) = \sum_{i=1}^d h(\bar{\theta}_i) - \sum_{i=1}^d h(\bar{\theta}_i \, | \, \bar{\theta}_{1:(i-1)}, \hat{\theta}) \nonumber \\
    \geq & \, \sum_{i=1}^d h(\bar{\theta}_i) - \sum_{i=1}^d h(\bar{\theta}_i \, | \, \hat{\theta}_i )
    = \sum_{i=1}^d \I (\bar{\theta}_i ; \hat{\theta}_i) 
\end{align}
Let $B_i = \I (\bar{\theta}_i ; \hat{\theta}_i) $, then we have
\[
\E \left[ \| \bar{\theta} - \hat{\theta} \|_2^2 \right] =
\sum_{i=1}^d \E \left[ (\bar{\theta}_i - \hat{\theta}_i )^2 \right] \geq \sum_{i=1}^d \exp(-2 B_i) \sigma_i^2
\]
Clearly this lower bound can be achieved. 
We now solve the optimization problem \ref{opt:diagonal}. Notice that for all $B_i >0$, the derivative of $\exp(-2 B_i) \sigma_i^2$ must be the same. Assume this derivative is $- \lambda$ where $\lambda >0$, then we have
\[
- 2 \exp(-2 B_i) \sigma_i^2 = - \eta \quad \text{for} \quad B_i >0.
\]
That is
\[
B_i = \frac{1}{2} \log \frac{2 \sigma_i^2}{\eta} \quad \text{if $B_i > 0$}
\]
On the other hand, $B_i=0$ if and only if the derivative at $B_i = 0$ is greater than or equal to $-\eta$, i.e. $-2 \sigma^2_i \geq - \eta$, that is $2 \sigma_i^2 / \eta \leq 1$. Hence
\[
B_i = \frac{1}{2} \left[ \log \frac{2 \sigma_i^2}{\eta}\right]^+ = 
\frac{1}{2} \log \left( \max \{ 2 \sigma_i^2 /\eta, 1 \} \right )
\]
where $\lambda$ is determined by the equation
\[
\sum_{i=1}^d B_i = \frac{1}{2} \sum_{i=1}^d  \left[ \log \frac{2 \sigma_i^2}{\eta}\right]^+ = B
\]
So for the diagonal case, we have
\[
D(B, \Sigma) = \sum_{i=1}^d \exp(-2 B_i) \sigma^2_i = \sum_{i=1}^d \frac{\sigma_i^2 }{\max \left \{ \frac{2 \sigma_i^2}{\lambda}, \, 1 \right \} } = \sum_{i=1}^d \min \left \{ \sigma^2_i, \frac{\eta}{2} \right \}
\]
Moreover, the optimal $\hat{\theta}$ is
$
\hat{\theta} = F \bar{\theta} + \epsilon$, where
\[
F = \mathrm{diag} \left( 1- \exp(-2B_1), 1- \exp(-2B_2), \cdots, 1- \exp(-2B_d) \right), 
\]
and $\epsilon$ is independently drawn from $N(0, \Phi)$, where
\[
\Phi = \mathrm{diag} \left(
\left[1 - \exp(-2B_1) \right] \exp(-2B_1) \sigma_1^2, 
\cdots,
\left[1 - \exp(-2B_d) \right] \exp(-2B_d) \sigma_d^2
\right).
\]

\subsection{General case}
Recall that $\bar{\theta} \sim N(0, \Sigma)$. Assume that $\Sigma$ is positive-definite, so we have $\Sigma = U \Lambda U^\top$, where $\Lambda$ is a diagonal matrix encoding the eigenvalues of $\Sigma$, and $U$ is an orthogonal matrix encoding the eigen-vectors of $\Sigma$. Note that
\[
\Cov \left[ U^\top \bar{\theta} \right] = \E \left[ U^\top \bar{\theta} \bar{\theta}^\top U \right] = U^\top \Sigma U 
= U^\top U \Lambda U^\top U = \Lambda.
\]
Moreover, we have
\[
\I \left(\bar{\theta}; \, \hat{\theta} \right) = \I \left(
U^\top \bar{\theta}; \, U^\top \hat{\theta} \right).
\]
and
\[
\left \|
\bar{\theta} - \hat{\theta}
\right \|_2^2 = \left \| U^\top \bar{\theta} - U^\top \hat{\theta} \right \|_2^2.
\]
Thus, the original problem reduces to 
\begin{align}
    D(B, \Sigma) =& \, \min_{U^\top \hat{\theta}} \E \left [ \| U^\top \hat{\theta} - U^\top \bar{\theta} \|_2^2 \right ] \\
    \text{s.t.} \quad & \, 
    U^\top \bar{\theta} \sim N(0, \Lambda) \nonumber \\
    & \,
    \I (U^\top\bar{\theta}; U^\top \hat{\theta}) \leq B  \nonumber
\end{align}

So we have the following theorem:
\begin{theorem}[Gaussian distortion-rate function]
\label{thm:gaussian-dr}
Let $\Lambda=\mathrm{diag}(\lambda_1, \ldots, \lambda_d)$ be the diagonal matrix encoding the eigenvalues of $\Sigma$, and $U$ denote the orthogonal matrix encoding the eigenvectors of $\Sigma$. Define $\eta>0$ as the unique solution of equation
$
\sum_{i=1}^d \left[ \log \left( 2 \lambda_i / \eta \right) \right]^+ = 2B$,
and $B_i = \frac{1}{2} \left[\log \left( 2 \lambda_i / \eta \right) \right]^+$ for all $i=1,2,\ldots, d$.
Then we have 
\[
D(B, \Sigma) = \sum_{i=1}^d \min \left \{ \lambda_i, \eta/ 2 \right \},
\]
which is achieved by
\[
\hat{\theta} = U F U^\top \bar{\theta} + U \epsilon,
\]
where 
\[
F = \mathrm{diag} \left( 1- \exp(-2B_1), 1- \exp(-2B_2), \cdots, 1- \exp(-2B_d) \right), 
\]
and $\epsilon$ is independently drawn from $N(0, \Phi)$, with
\[
\Phi = \mathrm{diag} \left(
\left[1 - \exp(-2B_1) \right] \exp(-2B_1) \lambda_1, 
\cdots,
\left[1 - \exp(-2B_d) \right] \exp(-2B_d) \lambda_d
\right).
\]

\end{theorem}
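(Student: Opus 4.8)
The plan is to reduce the general case to the diagonal case already solved, exploiting that both the objective and the rate constraint are invariant under the orthogonal change of coordinates $\bar{\theta}\mapsto U^\top\bar{\theta}$. Concretely, I would diagonalize $\Sigma = U\Lambda U^\top$ with $\Lambda=\mathrm{diag}(\lambda_1,\ldots,\lambda_d)$ and rewrite the problem in the rotated variables $\tilde{\theta}:=U^\top\bar{\theta}$ and $\zeta:=U^\top\hat{\theta}$. Two identities drive the reduction: $\I(\bar{\theta};\hat{\theta})=\I(U^\top\bar{\theta};U^\top\hat{\theta})$, since $U$ is an invertible map and mutual information is invariant under such maps; and $\|\bar{\theta}-\hat{\theta}\|_2^2=\|U^\top\bar{\theta}-U^\top\hat{\theta}\|_2^2$, since $U$ is orthogonal. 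Because $\Cov[U^\top\bar{\theta}]=U^\top\Sigma U=\Lambda$ is diagonal, the distortion-rate problem for $\Sigma$ coincides exactly with the distortion-rate problem for the diagonal covariance $\Lambda$; hence $D(B,\Sigma)=D(B,\Lambda)$, and any optimal $\zeta$ for the diagonal problem gives an optimal $\hat{\theta}=U\zeta$ for the original one.

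With the reduction in place, I would invoke the diagonal-case analysis established earlier. Its lower bound, $\E[\|\tilde{\theta}-\zeta\|_2^2]\geq\sum_{i=1}^d\exp(-2B_i)\lambda_i$ with $B_i:=\I(\tilde{\theta}_i;\zeta_i)$ and $\sum_i B_i\leq B$, follows from the chain rule for differential entropy together with ``conditioning reduces entropy'' (yielding $\I(\tilde{\theta};\zeta)\geq\sum_i\I(\tilde{\theta}_i;\zeta_i)$, using that the $\tilde{\theta}_i$ are independent) and the scalar lemma applied coordinatewise. This reduces computing $D(B,\Lambda)$ to the water-filling program (\ref{opt:diagonal}): minimize $\sum_i\exp(-2B_i)\lambda_i$ subject to $B_i\geq 0$ and $\sum_i B_i=B$.

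I would then solve (\ref{opt:diagonal}) by a KKT/Lagrangian argument. The objective is strictly convex in $\mathbf{B}$, so the minimizer is unique. Equating the active-coordinate derivatives $-2\exp(-2B_i)\lambda_i$ to a common multiplier $-\eta$ gives $B_i=\tfrac{1}{2}[\log(2\lambda_i/\eta)]^+$, with $\eta$ pinned down by the budget $\sum_i[\log(2\lambda_i/\eta)]^+=2B$; substituting back, $\exp(-2B_i)\lambda_i=\min\{\lambda_i,\eta/2\}$, which delivers the claimed value $D(B,\Sigma)=\sum_i\min\{\lambda_i,\eta/2\}$. Assembling the per-coordinate optimal scalar channels from the lemma yields $\zeta=F\tilde{\theta}+\epsilon$ with the stated $F$ and $\epsilon\sim N(0,\Phi)$; rotating back gives $\hat{\theta}=U(FU^\top\bar{\theta}+\epsilon)=UFU^\top\bar{\theta}+U\epsilon$, matching the statement.

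The main obstacle is the information-theoretic lower bound inside the diagonal case, i.e. justifying $\I(\tilde{\theta};\zeta)\geq\sum_i\I(\tilde{\theta}_i;\zeta_i)$: this needs the chain-rule decomposition $h(\tilde{\theta}\mid\zeta)=\sum_i h(\tilde{\theta}_i\mid\tilde{\theta}_{1:i-1},\zeta)$ bounded via $h(\tilde{\theta}_i\mid\tilde{\theta}_{1:i-1},\zeta)\leq h(\tilde{\theta}_i\mid\zeta_i)$, combined with $h(\tilde{\theta})=\sum_i h(\tilde{\theta}_i)$ from independence. The rest is bookkeeping: checking that the independent scalar channels are jointly feasible and that the single multiplier $\eta$ is consistent across all active coordinates.
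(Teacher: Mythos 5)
Your proposal is correct and follows essentially the same route as the paper: reduce the general case to the diagonal one via the orthogonal rotation $U^\top$ (using invariance of mutual information and of the Euclidean norm), lower-bound the distortion coordinatewise via the chain rule, conditioning-reduces-entropy, and the scalar lemma, then solve the resulting water-filling program by a Lagrangian/KKT argument to obtain $B_i=\tfrac{1}{2}[\log(2\lambda_i/\eta)]^+$ and $D(B,\Sigma)=\sum_i\min\{\lambda_i,\eta/2\}$. The only difference is expository (you present the argument top-down, the paper builds it up scalar $\to$ diagonal $\to$ general), so no further comparison is needed.
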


% \newpage
% \noindent
% {
% \vspace{2in}
% \huge \textbf{This page is intentionally left as blank.}}
% \newpage 
\end{document}